\makeatletter

\newcommand{\Correct}{\textnormal{\textsf{Correction-type}}\xspace}
\newcommand{\Bregman}{\textnormal{\textsf{Bregman-type}}\xspace}
\def\input@path{{utility/}}
\makeatother

\PassOptionsToPackage{notheorems,nosubfloats}{templateutils}
\documentclass[final]{template} %

\templateproceedings{}{}
\templatepages{}
\title[Gradient-Variation Online Learning]{A Simple and Unified Method for Gradient-Variation Regret}
\usepackage{times}
\usepackage{parskip}

\let\For\relax

\usepackage[utf8]{inputenc} %
\usepackage[T1]{fontenc}    %
\usepackage{lipsum,booktabs}
\usepackage{amsmath,mathrsfs,amssymb,amsfonts,bm}
\usepackage{rotating}
\usepackage{pdflscape}
\usepackage{tcolorbox}
\usepackage{hyperref,url,dsfont,nicefrac}
\usepackage{multirow}
\usepackage{makecell}
\usepackage{anyfontsize}
\usepackage{bbding}
\usepackage{paralist}
\usepackage{enumerate}
\hypersetup{
    colorlinks,
    breaklinks,
    linkcolor = blue,
    citecolor = blue,
    urlcolor  = black,
}
\allowdisplaybreaks
\usepackage{appendix}
\usepackage{multirow,makecell,tabularx}
\usepackage{xfrac}
\usepackage{nicefrac}
\usepackage{threeparttable}
\usepackage{pifont}
\usepackage{tikz}
\usepackage{wrapfig}
\usepackage{enumitem}
\usepackage{circledsteps}

\usepackage{algorithm}
\usepackage{algpseudocode}

\newlength\myindent
\renewcommand{\tilde}{\widetilde}
\renewcommand{\hat}{\widehat}

\def \A {\mathcal{A}}

\def \Bc {\mathcal{B}}

\def \D {\mathcal{D}}

\def \H {\mathcal{H}}

\def \M {\mathcal{M}}

\def \O {\mathcal{O}}

\def \R {\mathbb{R}}

\def \Vb {\bar{V}}

\def \X {\mathcal{X}}
\def \Y {\mathcal{Y}}

\def \a {\mathbf{a}}
\def \b {\mathbf{b}}

\def \c {\mathbf{c}}

\def \g {\mathbf{g}}

\def \m {\boldsymbol{m}}

\def \p {\boldsymbol{p}}

\def \r {\mathbf{r}}

\def \u {\mathbf{u}}
\def \v {\mathbf{v}}

\def \x {\mathbf{x}}
\def \y {\mathbf{y}}

\def \Vt{\tilde{V}}

\def \xh {\hat{\x}}

\def \xs {\x^\star}

\def \epsilon {\varepsilon}

\def \sumT {\sum_{t=1}^T}

\def \sumTT {\sum_{t=2}^T}

\def \sumN {\sum_{i=1}^N}

\def \is {{i^\star}}

\def \ellb {\boldsymbol{\ell}}

\def \Reg {\textsc{Reg}}

\def \DReg {\textsc{D-Reg}}

\def \OOMD {\textsc{OOMD}\xspace}

\def \exp {\text{exp}}
\def \sc {\text{sc}}
\def \les {\lesssim}

\newcommand{\rom}[1]{\setcounter{romancounter}{#1}\textit{(\roman{romancounter})}}

\usepackage{mathtools}
\usepackage{autobreak}
\let\norm\undefined %
\newcommand\norm[1]{\left\| #1 \right\|}

\newcommand\inner[2]{\langle #1, #2 \rangle}
\newcommand\ceil[1]{\lceil #1 \rceil}

\newcommand\sbr[1]{\left( #1 \right)}
\newcommand\mbr[1]{\left[ #1 \right]}
\newcommand\bbr[1]{\left\{ #1 \right\}}

\DeclareMathOperator*{\argmin}{arg\,min}

\def \define {\triangleq}

\def \zero {\mathbf{0}}

\numberwithin{equation}{section}

\usepackage{amsthm}

\newenvironment{proof}{\par\noindent\textbf{Proof.}\ }{\qedbox\par\smallskip}

\newtheorem{Theorem}{Theorem}
\newtheorem{Lemma}{Lemma}
\newtheorem{Corollary}{Corollary}

\theoremstyle{definition}

\newtheorem{Proposition}{Proposition}

\newtheorem{Definition}{Definition}

\newtheorem{Assumption}{Assumption}

\definecolor{darkgreen}{RGB}{0,100,0}
\newtheorem{NoteTemp}{Note}

\newtheorem*{ProofSketch}{Proof Sketch}
\AddToHook{env/ProofSketch/end}{\qedbox}

\usepackage{graphicx,subfigure} %

\definecolor{myred}{RGB}{192,0,1}

\definecolor{myblue}{RGB}{68,114,196}

\definecolor{mygreen}{RGB}{0,128,0}

\definecolor{myorange}{RGB}{230,120,0}

\definecolor{mypurple}{RGB}{110,0,160}

\usepackage{prettyref}
\newcommand{\pref}[1]{\prettyref{#1}}

\newcommand{\savehyperref}[2]{\texorpdfstring{\hyperref[#1]{#2}}{#2}}
\newrefformat{eq}{\savehyperref{#1}{Eq.~\textup{(\ref*{#1})}}}
\newrefformat{eqn}{\savehyperref{#1}{Eq.~(\ref*{#1})}}
\newrefformat{lem}{\savehyperref{#1}{Lemma~\ref*{#1}}}
\newrefformat{lemma}{\savehyperref{#1}{Lemma~\ref*{#1}}}
\newrefformat{def}{\savehyperref{#1}{Definition~\ref*{#1}}}
\newrefformat{line}{\savehyperref{#1}{Line~\ref*{#1}}}
\newrefformat{thm}{\savehyperref{#1}{Theorem~\ref*{#1}}}
\newrefformat{cond}{\savehyperref{#1}{Condition~\ref*{#1}}}
\newrefformat{table}{\savehyperref{#1}{Table~\ref*{#1}}}
\newrefformat{corr}{\savehyperref{#1}{Corollary~\ref*{#1}}}
\newrefformat{cor}{\savehyperref{#1}{Corollary~\ref*{#1}}}
\newrefformat{sec}{\savehyperref{#1}{Section~\ref*{#1}}}
\newrefformat{subsec}{\savehyperref{#1}{Section~\ref*{#1}}}
\newrefformat{app}{\savehyperref{#1}{Appendix~\ref*{#1}}}
\newrefformat{appendix}{\savehyperref{#1}{Appendix~\ref*{#1}}}
\newrefformat{ass}{\savehyperref{#1}{Assumption~\ref*{#1}}}
\newrefformat{assum}{\savehyperref{#1}{Assumption~\ref*{#1}}}
\newrefformat{ex}{\savehyperref{#1}{Example~\ref*{#1}}}
\newrefformat{fig}{\savehyperref{#1}{Figure~\ref*{#1}}}
\newrefformat{alg}{\savehyperref{#1}{Algorithm~\ref*{#1}}}
\newrefformat{remark}{\savehyperref{#1}{Remark~\ref*{#1}}}
\newrefformat{conj}{\savehyperref{#1}{Conjecture~\ref*{#1}}}
\newrefformat{prop}{\savehyperref{#1}{Proposition~\ref*{#1}}}
\newrefformat{proto}{\savehyperref{#1}{Protocol~\ref*{#1}}}
\newrefformat{prob}{\savehyperref{#1}{Problem~\ref*{#1}}}
\newrefformat{claim}{\savehyperref{#1}{Claim~\ref*{#1}}}
\newrefformat{que}{\savehyperref{#1}{Question~\ref*{#1}}}
\newrefformat{op}{\savehyperref{#1}{Open Problem~\ref*{#1}}}
\newrefformat{fn}{\savehyperref{#1}{Footnote~\ref*{#1}}}
\newrefformat{require}{\savehyperref{#1}{Requirement~\ref*{#1}}}
\newcommand{\qedbox}{\qed}

\newcounter{romancounter}
\usepackage{colortbl}
\newcommand{\omlprod}{\textnormal{\textsf{Optimistic-Adapt-ML-Prod}}\xspace}

\newrefformat{tab}{\savehyperref{#1}{Table~\ref*{#1}}}
\renewcommand{\Reg}{\textnormal{\textsc{Reg}}}
\renewcommand{\DReg}{\textnormal{\textsc{D-Reg}}}

\def \OOMD {\textnormal{\textsf{OOMD}}\xspace}
\def \OONS {\textnormal{\textsf{OONS}}\xspace}
\newcommand{\MetaReg}{\textnormal{\textsc{Meta-Reg}}}
\newcommand{\BaseReg}{\textnormal{\textsc{Base-Reg}}}
\def \OOGD {\textnormal{\textsf{OOGD}}\xspace}

\def \Swordpp {\textnormal{\textsf{Sword++}}\xspace}
\def \Swordo {\textnormal{\textsf{Sword.optimism}}\xspace}
\def \scvx {\textnormal{sc}}
\def \exp {\textnormal{exp}}
\def \cvx {\textnormal{c}}

\templateSuppressPackageChecks
\hypersetup{hypertexnames=false}

\templateauthor{%
 \Name{Hang Yu} \Email{yuhang@lamda.nju.edu.cn}\\
 \Name{Peng Zhao} \Email{zhaop@lamda.nju.edu.cn}\\
 \addr State Key Laboratory for Novel Software Technology, Nanjing University, China \\
 School of Artificial Intelligence, Nanjing University, China
}

\begin{document}

\maketitle

\begin{abstract}%
Gradient-variation online learning has attracted increasing attention for its connections to game theory and optimization.
It has been studied in both non-stationary online learning, which aims to minimize dynamic regret against time-varying comparator sequences, and universal online learning, which seeks to adapt to unknown loss curvature.
Prior work obtained gradient-variation bounds using different techniques tailored to each setting.
We propose a  \emph{simple} and \emph{unified} method based on surrogate losses with a quadratic penalty on decision movement, yielding gradient-variation bounds in both settings.
For non-stationary online learning, the method induces an optimistic-to-gradient-variation reduction, enabling a broad class of algorithms to match the rate established by \citet{zhao2024adaptivity}.
For universal online learning, it yields a \emph{simple} algorithm that matches the \emph{optimal} universal regret bounds of \citet{yan2024simple} while relaxing the extended smoothness assumption.
Finally, we apply our method to online games and improve on existing guarantees.
\end{abstract}

\begin{keywords}%
  Gradient-variation regret, Universal online learning, Dynamic regret, Online game.
\end{keywords}
\section{Introduction}
Online convex optimization (OCO) provides a versatile framework for studying sequential interactions between a learner and an environment \citep{hazan2016introduction,orabona2019modern}. At each round $t \in [T]$, the learner selects a decision $\x_t$ from a compact convex set $\X \subseteq \R^d$, while the environment simultaneously chooses a convex loss function $f_t: \X \rightarrow \R$.
The learner then incurs loss $f_t(\x_t)$ and observes the gradient $\nabla f_t(\x_t)$, aiming to optimize the \mbox{game-theoretical} performance measure known as static regret~\citep{cesa2006prediction}, which is defined as
\begin{equation}
  \label{eq:regret}
  \Reg_T = \sumT f_t(\x_t) - \min_{\x \in \X} \sumT f_t(\x).
\end{equation}
For OCO, the minimax optimal regret bounds are $\O(\sqrt{T})$ for convex functions~\citep{zinkevich2003online}, $\O(\frac{1}{\lambda}\log T)$ for $\lambda$-strongly convex functions \citep{hazan2016introduction}, and $\O(\frac{d}{\alpha}\log T)$ for $\alpha$-exp-concave functions~\citep{yang2014regret}.
Beyond worst-case minimax optimality, prior work studied regret bounds that adapt to problem-dependent hardness.
Among these problem-dependent quantities, \emph{gradient variation}~\citep{chiang2012online, yang2014regret} is of particular interest.
It quantifies the cumulative variation in gradients between consecutive loss functions:
\begin{equation}
    \label{eq:VT}
    V_T \define \sumTT \sup_{\x \in \X} \|\nabla f_t(\x) - \nabla f_{t-1}(\x)\|^2_2.
\end{equation}
Adapting to gradient variation improves regret bounds to $\O(\sqrt{V_T})$, $\O(\frac{1}{\lambda}\log V_T)$, and $\O(\frac{d}{\alpha}\log V_T)$ for convex, $\lambda$-strongly convex, and $\alpha$-exp-concave loss functions, respectively.
Gradient-variation adaptivity also helps bridge adversarial and stochastic optimization~\citep{Sarah2022between,chen2024optimistic}, enables fast rates in games~\citep{rakhlin2013optimization,syrgkanis2015fast}, and facilitates acceleration in smooth offline optimization~\citep{cutkosky2019anytime,kavis2019unixgrad,zhao2025gradient}, among other applications. Recent work further explored gradient variation in more challenging settings, including non-stationary online learning and universal online learning.

\textbf{Non-stationary Online Learning.} Static regret~\eqref{eq:regret} measures performance against a fixed comparator $\xs\in\argmin_{\x\in\X}\sumT f_t(\x)$ and becomes an inadequate measure in non-stationary environments where the optimal decision may change over time. To address this limitation, prior work studied dynamic regret~\citep{zinkevich2003online}, which compares the learner's performance with a sequence of \mbox{time-varying} comparators \smash{$\{\u_t\}_{t=1}^T$} and is defined as follows:
\begin{equation}
    \label{eq:dreg}
    \DReg_T(\u_{1:T}) \define\sumT f_t(\x_t) - \sumT f_t(\u_t).
\end{equation}
A desirable dynamic regret bound adapts to the comparator path length $P_T\define\sum_{t=2}^T\norm{\u_t-\u_{t-1}}_2$, which measures cumulative changes in the comparator sequence and is unknown to the learner.
 \citet{zhang2018adaptive} obtained the minimax-optimal dynamic regret bound \smash{$\O(\sqrt{T(1+P_T)})$}, which was later strengthened for smooth functions to  $\O(\sqrt{(1+V_T+P_T)(1+P_T)})$~\citep{zhao2024adaptivity}.

\textbf{Universal Online Learning.} Classical approaches require prior knowledge of the curvature of loss functions to select suitable algorithms and obtain the corresponding regret guarantees. Such information, however, may be unavailable in practice. Universal online learning aims to develop a single algorithm that adapts to unknown curvature and matches the regret guarantees achievable when the curvature is known~\citep{van2016metagrad,wang2019adaptivity,zhang2022simple,yan2023universal,yan2024simple}. 
\citet{van2016metagrad} initiated the study of universal online learning by developing a single algorithm with regret bounds of $\O(\frac{d}{\alpha}\log T)$ for $\alpha$-exp-concave functions and $\O(\sqrt{T})$ for convex functions. 
\citet{wang2019adaptivity} subsequently obtained $\O(\frac{1}{\lambda}\log T)$ for $\lambda$-strongly convex functions while retaining optimal regret guarantees for convex and exp-concave functions. 
Exploiting smoothness, \citet{zhang2022simple} refined minimax guarantees to optimal gradient-variation bounds for strongly convex and exp-concave functions. 
\citet{yan2023universal} further obtained a regret of $\O(\sqrt{V_T\log V_T})$ bound for convex functions. Under an extended smoothness assumption, \citet{yan2024simple} achieved optimal universal gradient-variation bounds for all three classes of functions.

\textbf{Our Contributions.} In this work, we propose a \emph{simple} and \emph{unified} method for gradient-variation regret bounds in these challenging settings. For non-stationary online learning, our method yields a \emph{black-box reduction} that converts optimistic regret bounds into gradient-variation bounds. This reduction enables a broad class of algorithms to achieve $\O(\sqrt{(1+V_T+P_T)(1+P_T)})$, matching the rate established by \citet{zhao2024adaptivity}.
For universal online learning, our method simplifies algorithm design and yields an algorithm that matches the \emph{optimal} gradient-variation bounds of \citet{yan2024simple} while relaxing their extended smoothness assumption. 
These bounds are $\O(\sqrt{V_T})$, $\O(\frac{d}{\alpha}\log V_T)$, and $\O(\frac{1}{\lambda}\log V_T)$ for convex, $\alpha$-exp-concave, and $\lambda$-strongly convex functions, respectively. \pref{table:universal-comparison} compares these guarantees with prior work. 
Notably, our algorithm preserves the ``Regret bounded by Variation in Utilities'' (RVU) property, which is crucial for fast convergence in games, while requiring minimal smoothness assumptions.
We further apply our universal regret guarantee to facilitate fast convergence in online games.
Under honest play, the sum of the two players' regrets is $\O(1)$ in both bilinear and strongly convex--strongly concave games, matching \citet{yan2023universal}. Under dishonest play, the individual regret guarantee in bilinear games improves from $\O(\sqrt{T\log T})$ in \citet{yan2023universal} to $\O(\sqrt{T})$.

\textbf{Organization.} The remainder of the paper is organized as follows. \pref{sec:preliminary} introduces the preliminaries and reviews existing approaches to gradient-variation bounds. \pref{sec:approach} presents our unified method for deriving gradient-variation regret bounds. \pref{sec:nonstationary} establishes a black-box reduction for non-stationary online learning. \pref{sec:universality} develops a simple universal algorithm with optimal gradient-variation regret bounds. \pref{sec:applications-games} applies these universal guarantees to online games. Finally, \pref{sec:Summary} concludes the paper. All proofs are deferred to the appendices.

\begin{table}[!t]
\centering
\caption{\small{Comparison of universal regret guarantees. The regret columns show bounds for strongly convex, exp-concave, and convex loss functions using $\O(\cdot)$ notation. We list universal guarantees in terms of the gradient variation $V_T$ or the time horizon $T$. The ``Regret bounded by Variation in Utilities'' (RVU) column indicates whether the method supports this property for fast convergence in online games. In the Smoothness column, ``No'' means no smoothness assumption, ``Yes'' means smoothness on $\X$, and ``Extended'' means smoothness on a larger domain containing $\X$.}}
\label{table:universal-comparison}
\vspace{1mm}
\begingroup
\small
\renewcommand{\arraystretch}{1.6}
\setlength{\tabcolsep}{4pt}
\resizebox{0.95\linewidth}{!}{%
\begin{tabular}{c|ccc|c|c}
\hline
\hline
\multirow{2}{*}{\textbf{Works}} & \multicolumn{3}{c|}{\textbf{Regret Bounds}} & \multirow{2}{*}{\textbf{RVU}} & \multirow{2}{*}{\textbf{Smoothness}}\\
\cline{2-4}
 & \makecell{Strongly Convex} & Exp-concave & Convex & & \\
\hline
\citet{zhang2022simple} & $\O(\log V_T)$ & $\O(d\log V_T)$ & $\O(\sqrt{T})$ & No & Yes\\
\hline
\citet{yan2023universal} & $\O(\log V_T)$ & $\O(d\log V_T)$ & $\O(\sqrt{V_T\log V_T})$ & Yes & Yes\\
\hline
\citet{yan2024simple} & $\O(\log V_T)$ & $\O(d\log V_T)$ & $\O(\sqrt{V_T})$ & No & Extended\\
\hline
\hline
\rowcolor{gray!13}\textbf{Ours} & $\O(\log V_T)$ & $\O(d\log V_T)$ & $\O(\sqrt{V_T})$ & Yes & Yes\\
\hline
\hline
\end{tabular}}
\endgroup
\end{table}

\section{Preliminaries}
\label{sec:preliminary}
This section introduces the notation, assumptions, and definitions used throughout the paper and reviews two existing approaches to deriving gradient-variation regret bounds.
\vspace{-3mm}
\subsection{Notation, Assumptions, and Definitions}
We use $\|\cdot\|$ for $\|\cdot\|_2$ by default.
For $\v\in\R^d$, let $v_i$ denote its $i$-th coordinate.
For a strictly convex and differentiable function $\psi:\X\rightarrow \R$, the induced Bregman divergence is defined as $\D_\psi(\x, \y) \define \psi(\x) - \psi(\y) - \inner{\nabla \psi(\y)}{\x - \y}$.
We write $a\les b$ or $a=\O(b)$ when there exists a constant $C<\infty$ such that $a\le Cb$.
We use $\O(\cdot)$ notation to highlight the dependence on $T$ and $V_T$ while treating iterated logarithmic factors as constants~\citep{luo2015achieving}.

\begin{Assumption}[Boundedness]
    \label{assum:boundedness}
    The domain $\X\subseteq\R^d$ has diameter at most $D$, so $\|\x-\y\|\le D$ for all $\x,\y\in\X$. For every $t\in[T]$ and $\x\in\X$, the loss gradient satisfies $\|\nabla f_t(\x)\|\le G$.
\end{Assumption}
\begin{Assumption}[Smoothness]
    \label{assum:smoothness}
    Each loss function $f_t$ is $L$-smooth on $\X$, meaning that $\|\nabla f_t(\x)-\nabla f_t(\y)\|\le L\|\x-\y\|$ for all $\x,\y\in\X$.
\end{Assumption}
The domain boundedness and gradient boundedness are standard assumptions for regret minimization in OCO~\citep{shalev2007online,hazan2016introduction}.
The smoothness assumption  is necessary for first-order algorithms to achieve the gradient-variation regret~\citep{chiang2012online}.

In the following, we provide formal definitions of strong convexity and exp-concavity.
\begin{Definition}[Strong Convexity]
    \label{def:sconvex}
    A function $f(\cdot)$ is $\lambda$-strongly convex if $f(\x) - f(\y) \le \inner{\nabla f(\x)}{\x - \y} - \frac{\lambda}{2} \cdot \|\x - \y\|^2$ holds for any $\x,\y \in \X$.
\end{Definition}
\begin{Definition}[Exp-concavity]
    \label{def:expconcave}
    A function $f(\cdot)$ is $\alpha$-exponentially concave (abbreviated as \emph{exp-concave}) if $f(\x) - f(\y) \le \inner{\nabla f(\x)}{\x - \y} - \frac{\alpha}{2} \cdot \inner{\nabla f(\x)}{\x - \y}^2$ holds for any $\x,\y \in \X$.
\end{Definition}
The formal definition of $\beta$-exp-concavity is that $\exp(-\beta f(\cdot))$ is concave. Under Assumption~\ref{assum:boundedness}, $\beta$-exp-concavity implies \pref{def:expconcave} with $\alpha = \frac{1}{2} \cdot \min\{1/(4GD), \beta\}$~{\citep[Lemma~4.3]{hazan2016introduction}}. For clarity and simplicity, we adopt \pref{def:expconcave} as an alternative definition of exp-concavity.

\subsection{Review of Existing Approaches}

A standard approach to deriving gradient-variation regret bounds first controls regret in terms of the empirical gradient variation
\begin{align*}
  \Vb_T\define\sumTT\norm{\nabla f_t(\x_t)-\nabla f_{t-1}(\x_{t-1})}^2.
\end{align*}
To express the resulting bound in terms of $V_T$, existing analyses decompose $\Vb_T$ into a gradient-variation term and a term involving the learner's decision trajectory.
We summarize two types of analysis adopted in prior work for this decomposition: \Correct and \Bregman~\citep{chiang2012online,rakhlin2013optimization,syrgkanis2015fast,zhao2020dynamic,yan2023universal,yan2024simple,xie2024gradient,zhao2024adaptivity,zhao2025adaptivity,zhao2026gradient}.

\paragraph{Correction-type Analysis} Inserting $\nabla f_{t-1}(\x_t)$ gives
\begin{align}
  \Vb_T\le {}&2\sumTT\norm{\nabla f_t(\x_t)-\nabla f_{t-1}(\x_t)}^2
  +2\sumTT\norm{\nabla f_{t-1}(\x_t)-\nabla f_{t-1}(\x_{t-1})}^2\nonumber\\
  \le {}&2V_T+2L^2\sumTT\norm{\x_t-\x_{t-1}}^2,
  \label{eq:method-movement-decomposition}
\end{align}
where the first inequality follows from the AM--GM inequality, and the second uses $L$-smoothness and the definition of $V_T$~\eqref{eq:VT}.
To obtain a regret bound in terms of $V_T$ alone, the key is to cancel the positive movement term $\sumTT\norm{\x_t-\x_{t-1}}^2$ in \pref{eq:method-movement-decomposition} with a matching negative term.

For static regret, optimistic online mirror descent (\OOMD) provides the required negative movement terms when the loss function class and curvature parameters are known~\citep{chiang2012online}.
With a suitable configuration for each class, it achieves optimal gradient-variation regret bounds for convex, strongly convex, and exp-concave loss functions.
These negative movement terms are also a key component of the RVU property, which enables fast convergence in online games~\citep{rakhlin2013online,syrgkanis2015fast}.
The cancellation required by \Correct analysis is more difficult in universal online learning.
\citet{yan2023universal} made progress in this direction using a three-layer ensemble with cascaded corrections, but their universal regret bounds remain suboptimal.
For dynamic regret, \citet{zhao2024adaptivity} designed surrogate losses for the base learners in a meta-base ensemble to cancel the movement term and establish a gradient-variation dynamic regret bound.
Across the different settings discussed above, these works shared the \Correct decomposition but used different mechanisms to obtain the negative terms required for cancellation.

\paragraph{Bregman-type Analysis} For a fixed comparator $\u\in\X$, \Bregman decomposes $\Vb_T$ by inserting the gradients $\nabla f_t(\u)$ and $\nabla f_{t-1}(\u)$:
\begin{align}
  \Vb_T\le {}&3\sumTT\Bigl(\norm{\nabla f_t(\x_t)-\nabla f_t(\u)}^2
  +\norm{\nabla f_t(\u)-\nabla f_{t-1}(\u)}^2+\norm{\nabla f_{t-1}(\u)-\nabla f_{t-1}(\x_{t-1})}^2\Bigr)\nonumber\\
  \le {}&6L\sumTT\D_{f_t}(\u,\x_t)+3V_T
  +6L\sumTT\D_{f_{t-1}}(\u,\x_{t-1})
  \le3V_T+12L\sumT\D_{f_t}(\u,\x_t),
  \label{eq:method-bregman-decomposition}
\end{align}
where the first inequality follows from the AM--GM inequality and the second from Theorem 2.1.5 of \citet{Nesterov2018convexoptimization}.
Convexity yields matching negative Bregman-divergence terms in the regret decomposition that can cancel the positive terms in \pref{eq:method-bregman-decomposition}.
Using this analysis, \citet{yan2024simple} achieved optimal universal regret bounds for all three function classes.
This analysis has two limitations. First, applying Theorem 2.1.5 of \citet{Nesterov2018convexoptimization} requires $L$-smoothness on an extended domain $\X^+$ containing $\X$, a stronger condition than \pref{assum:smoothness}. Second, the negative Bregman-divergence terms do not directly support fast convergence in online games~\citep{yan2024simple}.

\section{A General Method for Negative Movement}
\label{sec:approach}
Our goal is to control the positive movement term in \pref{eq:method-movement-decomposition} while minimizing regret.
This motivates a Lagrangian viewpoint: we treat cumulative movement $\sumTT\norm{\x_t-\x_{t-1}}^2$ as an auxiliary budget constraint and incorporate a quadratic penalty into the original loss function, yielding the surrogate loss functions
\begin{align}
  \bar f_t(\x)\define f_t(\x)+\frac{\gamma}{2}\norm{\x-\x_{t-1}}^2,
  \label{eq:method-surrogate}
\end{align}
where $\gamma>0$ is a parameter to be specified and $\x_0=\x_1$.
The quadratic penalty discourages movement from the preceding decision and induces a negative movement term in the analysis. 
Taking convex functions as an example, we have the following property:
\begin{Proposition}
\label{prop:approach}
Under \pref{assum:boundedness}, let $f_t$ be convex for every $t\in[T]$. Let $\v_t\define\nabla f_t(\x_t)+\gamma(\x_t-\x_{t-1})$ be the \emph{corrected} gradient.
For any $\gamma>0$ and every decision sequence $\x_{1:T}\in\X^T$ and comparator sequence $\u_{1:T}\in\X^T$, we always have
\begin{align}
   \sumT \inner{\nabla f_t(\x_t)}{\x_t-\u_t}=
    \underbrace{\sumT\inner{\v_t}{\x_t-\u_t}}_{\text{Linearized regret}}
    -\underbrace{\frac\gamma2\sumTT\norm{\x_t-\x_{t-1}}^2}_{\text{Movement}}
    +\O(P_T),
    \label{eq:dynamic-app}
\end{align}
where $P_T\define\sum_{t=2}^T\norm{\u_t-\u_{t-1}}$ is the comparator path length.
\end{Proposition}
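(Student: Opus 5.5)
The identity is exact up to a term we will bound by $\O(P_T)$. The plan is to expand the linearized surrogate regret and isolate the penalty contribution $\gamma\sumT\inner{\x_t-\x_{t-1}}{\x_t-\u_t}$. By the definition of $\v_t$,
\begin{align*}
  \sumT\inner{\v_t}{\x_t-\u_t}=\sumT\inner{\nabla f_t(\x_t)}{\x_t-\u_t}+\gamma\sumT\inner{\x_t-\x_{t-1}}{\x_t-\u_t},
\end{align*}
and the $t=1$ term vanishes since $\x_0=\x_1$. It therefore suffices to show
\begin{align*}
  \gamma\sumTT\inner{\x_t-\x_{t-1}}{\x_t-\u_t}=\frac\gamma2\sumTT\norm{\x_t-\x_{t-1}}^2+\O(P_T),
\end{align*}
where the $\O(\cdot)$ hides constants depending on $\gamma$ and $D$. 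We read the $\O(P_T)$ term in the statement as a two-sided bounded remainder, so that the displayed relation is an identity with a remainder of magnitude $\O(\gamma D(D+P_T))$. Only its upper bound matters for regret, but it can be controlled in both directions.

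The key step is the three-point (polarization) identity
\begin{align*}
  2\inner{\x_t-\x_{t-1}}{\x_t-\u_t}=\norm{\x_t-\x_{t-1}}^2+\norm{\x_t-\u_t}^2-\norm{\x_{t-1}-\u_t}^2.
\end{align*}
Summing over $t$ yields the movement term exactly, plus the remainder $\frac\gamma2\sumTT\bigl(\norm{\x_t-\u_t}^2-\norm{\x_{t-1}-\u_t}^2\bigr)$. I then rewrite this remainder as a telescoping part plus a comparator-drift part:
\begin{align*}
  \sumTT\bigl(\norm{\x_t-\u_t}^2-\norm{\x_{t-1}-\u_{t-1}}^2\bigr)+\sumTT\bigl(\norm{\x_{t-1}-\u_{t-1}}^2-\norm{\x_{t-1}-\u_t}^2\bigr).
\end{align*}

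The telescoping sum equals $\norm{\x_T-\u_T}^2-\norm{\x_1-\u_1}^2$, which lies in $[-D^2,D^2]$ by \pref{assum:boundedness}. For the drift sum, each summand factors as $\inner{\u_t-\u_{t-1}}{2\x_{t-1}-\u_{t-1}-\u_t}$, so its absolute value is at most $2D\norm{\u_t-\u_{t-1}}$ by Cauchy--Schwarz together with the diameter bound. Summing gives at most $2DP_T$. The total remainder is therefore bounded in absolute value by $\frac\gamma2(D^2+2DP_T)=\O(1+P_T)$. Note that convexity of $f_t$ is not actually needed for this identity; it matters only when the linearized regret is later used to upper bound the true regret.

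The main obstacle is bookkeeping rather than any conceptual difficulty. First, the sign conventions must be tracked carefully so that the movement term emerges with coefficient exactly $-\frac\gamma2$ on the side of $\inner{\nabla f_t(\x_t)}{\x_t-\u_t}$. Second, the drift term has to be handled without squaring $\norm{\u_t-\u_{t-1}}$, so that the dependence stays linear in $P_T$; this is exactly what the factorization into $\inner{\u_t-\u_{t-1}}{2\x_{t-1}-\u_{t-1}-\u_t}$ accomplishes. Finally, the constant part $\gamma D^2/2$ should be stated explicitly and absorbed into $\O(P_T)$, interpreting it as $\O(1+P_T)$ as the paper's conventions allow.
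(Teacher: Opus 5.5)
Your proof is correct and follows essentially the same route as the paper: the three-point identity yields the movement term exactly, and the remainder splits into a telescoping part bounded by $D^2$ and a comparator-drift part bounded by $2D\norm{\u_t-\u_{t-1}}$ per round, giving the same $\frac{\gamma D^2}{2}+\gamma D P_T$. The only difference is that you bound the remainder in both directions, matching the ``$=$'' in the statement, whereas the paper's appendix proves only the one-sided upper bound that is used later.
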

\begin{ProofSketch} By the definition of $\v_t$, we have
\begin{align*}
  &\sum_{t=1}^T\inner{\nabla f_t(\x_t)}{\x_t-\u_t}\notag
  =\sum_{t=1}^T\inner{\v_t}{\x_t-\u_t}
  -\gamma\sum_{t=1}^T\inner{\x_t-\x_{t-1}}{\x_t-\u_t}\notag\\
  ={}&\sum_{t=1}^T\inner{\v_t}{\x_t-\u_t}
  -\frac\gamma2\sumTT\norm{\x_t-\x_{t-1}}^2
  -\frac\gamma2\sum_{t=1}^T
  \left(\norm{\x_t-\u_t}^2-\norm{\x_{t-1}-\u_t}^2\right),
\end{align*}
where the second equality follows from the three-point identity $2\inner{\a-\b}{\a-\c}
  =\norm{\a-\c}^2-\norm{\b-\c}^2+\norm{\a-\b}^2$, for $\a,\b,\c\in\R^d$ and the last term of \pref{eq:dynamic-app} is bounded by $\O(P_T)$.
\end{ProofSketch}
Note that we can generalize \pref{prop:approach} to general Bregman divergence in \pref{app:bregman-movement}.

The negative movement in \pref{eq:dynamic-app} plays an important role in optimistic online learning. In addition to a gradient sequence $\{\g_t\}_{t=1}^T$, an optimistic online learning algorithm has access to a prediction $\m_{t+1}$ of the next gradient $\g_{t+1}$, called an \emph{optimism}. By \pref{prop:approach}, any algorithm with a standard optimistic regret bound scaling with $\sqrt{\sum_t\|\g_t-\m_t\|^2}$ satisfies the ``Regret bounded by Variation in Utilities'' (RVU) property when applied to the surrogate losses in \pref{eq:method-surrogate} with optimism $\m_t=\g_{t-1}$. This property is defined as follows:
\begin{Definition}[RVU Property]
  \label{def:RVU}
    An algorithm with a parameter $\eta>0$ satisfies the RVU property if there exist constants $\alpha,\beta,\omega>0$ such that the regret $\Reg_T$ on decision sequence $\{\x_t\}_{t=1}^T$ and gradient sequence $\{\g_t\}_{t=1}^T$ is bounded by
    \begin{align}
        \Reg_T\le \frac{\alpha}{\eta}+ \beta\eta\sumTT \|\g_t-\g_{t-1}\|_\infty^2-\frac{\omega}{\eta} \sumTT\|\x_t-\x_{t-1}\|_1^2.
    \label{eq:RVU}
    \end{align}
\end{Definition}

\textbf{A Toy Example.} We illustrate our method with \omlprod~\citep{wei2016tracking}. Its original optimistic regret bound lacks the negative movement term required by the RVU property and therefore does not directly establish this property. Our gradient correction introduces the required term through \pref{prop:approach}. Specifically, with the previous gradient as the optimism, Theorem 3.4 of \citet{wei2016tracking} gives the regret bound $\Reg_T\le\O\big(\sqrt{\sum_t\norm{\g_t-\g_{t-1}}_\infty^2}\big)$.
At each round $t$, we update the algorithm using the corrected gradient $\v_t=\g_t+\frac{1}{32\eta}(\x_t-\x_{t-1})$ and set the optimism to $\m_t=\v_{t-1}$. Combining the original optimistic regret bound with \pref{prop:approach} yields $\Reg_T \le \O\big(\sqrt{\sum_t \norm{\g_t-\g_{t-1}}_\infty^2}-\frac{1}{\eta}\sum_t \norm{\x_t-\x_{t-1}}_1^2\big)$, establishing the RVU property for the modified algorithm.
Thus, our method extends the applicability of the RVU property to optimistic algorithms for which a negative movement term was not previously established.

In the next two sections, we further illustrate how the negative movement term in \pref{prop:approach} plays a key role in two settings: non-stationary online learning and universal online learning.

\section{Gradient-Variation Non-Stationary Online Learning}
\label{sec:nonstationary}
In the non-stationary setting, \pref{prop:approach} reduces the task of obtaining a gradient-variation dynamic regret bound to designing an algorithm with an optimistic dynamic regret bound.

\begin{Theorem}[Optimistic-to-Gradient-Variation Reduction]
\label{thm:optimistic-gradient-variation-reduction}
Under \pref{assum:boundedness} and \ref{assum:smoothness}, let $\u_{1:T}$ be a comparator sequence in $\X$ and let $P_T\define\sumTT\norm{\u_t-\u_{t-1}}$ be the path length. Suppose that the functions $\{f_t\}_{t=1}^T$ are convex and that an online algorithm $\A$ satisfies the following bound for any gradient sequence $\{\g_t\}_{t=1}^T$ and optimism sequence $\{\m_t\}_{t=1}^T$:
\begin{align}
   \sumT  \inner{\g_t}{\x_t-\u_t}
    \lesssim \sqrt{(1+P_T)\sbr{1+P_T+\sumT\norm{\g_t-\m_t}^2}}.
    \label{eq:dynamic-optimistic}
\end{align}
Then, when run with $\g_t=\nabla f_t(\x_t)+L(\x_t-\x_{t-1})$ and either of the following optimism,
\begin{align}
    \text{\rom{1} } \m_t=\nabla f_{t-1}(\x_{t-1})+L(\x_{t-1}-\x_{t-2}),\qquad
    \text{\rom{2} } \m_t=\nabla f_{t-1}(\x_{t-1}), \label{eq:mtchoice}
\end{align}
$\A$ satisfies the gradient-variation dynamic regret bound:
\begin{equation*}
    \DReg_T(\u_{1:T}) \le \O\sbr{\sqrt{(1+P_T)(1+P_T+V_T)}}.
\end{equation*}
\end{Theorem}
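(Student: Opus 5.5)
Our plan is to combine \pref{prop:approach} with $\gamma=L$, the assumed optimistic bound \pref{eq:dynamic-optimistic}, and the \Correct decomposition \pref{eq:method-movement-decomposition}, and then cancel the movement term with a self-bounding argument.

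\textbf{Step 1 (linearize and apply \pref{prop:approach}).} By convexity, $\DReg_T(\u_{1:T})\le\sumT\inner{\nabla f_t(\x_t)}{\x_t-\u_t}$. Since $\g_t=\nabla f_t(\x_t)+L(\x_t-\x_{t-1})$ is exactly the corrected gradient $\v_t$ with $\gamma=L$, \pref{prop:approach} gives
\[
\DReg_T(\u_{1:T})\le \sumT\inner{\g_t}{\x_t-\u_t}-\frac L2\sumTT\norm{\x_t-\x_{t-1}}^2+\O(P_T).
\]
The $\O(P_T)$ remainder telescopes: it is $\frac L2\sum_t(\norm{\x_{t-1}-\u_t}^2-\norm{\x_t-\u_t}^2)$, which is at most $\O(LD(1+P_T))$ by boundedness. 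We then bound the first term by \pref{eq:dynamic-optimistic}.

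\textbf{Step 2 (bound the optimism error).} Write $S\define\sumTT\norm{\x_t-\x_{t-1}}^2$ and treat $t=1$ as an $\O(G^2)$ constant.
\begin{itemize}
\item For choice (i), $\g_t-\m_t=(\nabla f_t(\x_t)-\nabla f_{t-1}(\x_{t-1}))+L(\x_t-\x_{t-1})-L(\x_{t-1}-\x_{t-2})$. By AM--GM and \pref{eq:method-movement-decomposition}, $\sumT\norm{\g_t-\m_t}^2\lesssim V_T+L^2S+G^2$.
\item For choice (ii), the extra term is only $L(\x_t-\x_{t-1})$, giving the same bound.
\end{itemize}

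\textbf{Step 3 (cancellation, the main obstacle).} Combining the steps, with $\sqrt{a+b}\le\sqrt a+\sqrt b$,
\[
\DReg_T\lesssim\sqrt{(1+P_T)(1+P_T+V_T)}+C\sqrt{(1+P_T)L^2S}-\frac L2 S+\O(1+P_T).
\]
The difficulty is that the positive movement term appears under a square root multiplied by $(1+P_T)$, while the negative one is linear in $S$. We resolve this with AM--GM:
\[
C L\sqrt{(1+P_T)S}\le \frac L2 S+\frac{C^2L}{2}(1+P_T).
\]
This cancels the movement completely and leaves only an $\O(1+P_T)$ residual, which is dominated by $\sqrt{(1+P_T)(1+P_T+V_T)}$.

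The only care needed is that the implicit constant in \pref{eq:dynamic-optimistic} is independent of $S$, so that the AM--GM weight can be chosen after the fact. No tuning of $\A$ is required, because the negative term's coefficient $L/2$ is fixed by $\gamma=L$. Collecting the bounds yields $\DReg_T(\u_{1:T})\le\O\bigl(\sqrt{(1+P_T)(1+P_T+V_T)}\bigr)$.
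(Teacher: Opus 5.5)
Your proposal is correct and follows essentially the same route as the paper. It applies \pref{prop:approach} (i.e., \pref{eq:prop1}) with $\gamma=L$ and bounds $\sumT\norm{\g_t-\m_t}^2\lesssim G^2+V_T+L^2S$ for both optimism choices via the \Correct decomposition. It then absorbs the resulting $\sqrt{(1+P_T)S}$ term into $-\frac L2 S$ by AM--GM, which is the paper's $2\sqrt{ab}-a\le b$. Your explicit observation that the leftover $\O(1+P_T)$ is dominated by $\sqrt{(1+P_T)(1+P_T+V_T)}$ spells out a step the paper leaves implicit.
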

The proof is given in \pref{app:optimistic-gradient-variation-reduction}, where we apply the optimistic dynamic regret bound in \pref{eq:dynamic-optimistic} to the linearized regret in \pref{eq:dynamic-app} and then use the negative movement term in \pref{eq:dynamic-app} to cancel the resulting positive terms, yielding the desired bound. \pref{thm:optimistic-gradient-variation-reduction} offers two advantages over the constrained version of the optimistic-to-gradient-variation reduction in \citet[Theorem~1]{zhao2026gradient}. First, our reduction requires smoothness only on $\X$, whereas theirs requires smoothness on an extended domain $\X^+$ containing $\X$ or an unbounded domain. Second, our reduction preserves the RVU property, whereas theirs does not.

Conveniently, the \Swordo algorithm in \citet{zhao2024adaptivity} already satisfies the condition in \pref{thm:optimistic-gradient-variation-reduction}. \Swordo uses an online ensemble framework with a two-layer meta-base architecture to adapt to an unknown comparator sequence. A diverse set of base learners is deployed to explore the environment, while a meta-algorithm adaptively tracks the best-performing base learner. Theorem 9 of \citet{zhao2024adaptivity} guarantees that \Swordo achieves the optimistic dynamic regret bound in \pref{eq:dynamic-optimistic}. We present \Swordo with the corrected gradient and optimism in \pref{alg:dynamic-simple} and provide the proof in \pref{app:optimistic-gradient-variation-reduction}.
 \begin{algorithm}[t]
    \caption{Dynamic Regret with Corrected Gradient}
    \label{alg:dynamic-simple}
    \begin{algorithmic}[1]
        \Require $G,D,L,T>0$.
        \State \textbf{Initialize:} Instantiate $\A_s$ as \Swordo without correction terms.\footnotemark Set $\x_1\in\X$.
        \For{$t=1$ \textbf{to} $T$}
            \State Play $\x_t$, incur loss $f_t(\x_t)$, and observe $\nabla f_t(\x_t)$.
            \State Construct $\g_t=\nabla f_t(\x_t)+L(\x_t-\x_{t-1})$ and optimism $\m_{t+1}$ in \pref{eq:mtchoice}.
            \State Provide $\g_t$ and $\m_{t+1}$ to the meta-learner and all base learners of $\A_s$.
            \State Obtain the next decision $\x_{t+1}$ from $\A_s$.
        \EndFor
    \end{algorithmic}
\end{algorithm}
\footnotetext{\citet{zhao2024adaptivity} introduced correction terms to obtain negative terms in the meta algorithm's analysis. We directly use the negative terms in \pref{prop:approach}, avoiding the need for such corrections.}

\begin{Corollary}
\label{cor:dynamic-gradient-variation}
Under \pref{assum:boundedness} and \ref{assum:smoothness}, suppose that $f_t$ is convex for every $t\in[T]$. Then, for every comparator sequence $\u_{1:T}\in\X^T$, \pref{alg:dynamic-simple} guarantees
\begin{align*}
    \DReg_T(\u_{1:T})
    \le\O\sbr{\sqrt{(1+P_T+V_T)(1+P_T)}}.
\end{align*}
\end{Corollary}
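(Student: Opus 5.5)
The corollary is an instance of \pref{thm:optimistic-gradient-variation-reduction}, so the plan is to check its hypothesis for the inner algorithm $\A_s$ of \pref{alg:dynamic-simple} and then invoke it.

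\textbf{Step 1: verify \pref{eq:dynamic-optimistic} for \Swordo without correction terms.} Theorem 9 of \citet{zhao2024adaptivity} gives an optimistic dynamic regret bound for \Swordo on linear losses $\inner{\g_t}{\cdot}$ with arbitrary optimism $\m_t$. I must confirm that this bound still holds once the correction terms are removed. The meta-learner (an optimistic Hedge variant) and the base learners (optimistic OGD with step sizes on a geometric grid) are analyzed on linear losses. Without correction terms, the meta regret is bounded by $\sqrt{\log N\sum_t\|\g_t-\m_t\|^2}$ plus the usual $\O(1)$ terms, with no stability term required. The base regret with step size $\eta$ is bounded by $(D^2+DP_T)/\eta+\eta\sum_t\|\g_t-\m_t\|^2$.

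The correction terms in the original algorithm were only used to cancel decision-movement terms that arise when $\|\g_t-\m_t\|^2$ is converted into $V_T$. Here that conversion is delegated to \pref{prop:approach}, so the purely optimistic statement of Theorem 9 should survive.

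The main technical point is the grid over step sizes. I must show that the best grid step size balances the two base terms, which yields $\sqrt{(1+P_T)(1+P_T+\sum_t\|\g_t-\m_t\|^2)}$. This requires the grid to cover the range $[\sqrt{(1+P_T)/(1+P_T+A_T)},\,1]$, where $A_T\define\sum_t\|\g_t-\m_t\|^2$. The required bounds on $A_T$ are $A_T\le \O((G+LD)^2T)$, which holds because $\|\g_t\|\le G+LD$ and $\|\m_t\|\le G+LD$. Hence the grid size $N=\O(\log T)$ is unchanged, and the $\log N$ factor is iterated-logarithmic, treated as a constant by the paper's notation.

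\textbf{Step 2: apply the theorem.} With $\g_t=\nabla f_t(\x_t)+L(\x_t-\x_{t-1})$ and $\m_t$ chosen per \pref{eq:mtchoice}, the conclusion of \pref{thm:optimistic-gradient-variation-reduction} directly gives the claimed bound.

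Internally, the theorem combines three facts:
\begin{itemize}
\item $\DReg_T\le\sum_t\inner{\nabla f_t(\x_t)}{\x_t-\u_t}$ by convexity;
\item the linearized regret is handled by \pref{prop:approach} with $\gamma=L$;
\item $\|\g_t-\m_t\|^2\lesssim \|\nabla f_t(\x_t)-\nabla f_{t-1}(\x_t)\|^2+L^2\|\x_t-\x_{t-1}\|^2+L^2\|\x_{t-1}-\x_{t-2}\|^2$, which follows from \pref{eq:method-movement-decomposition}.
\end{itemize}
Then $\sqrt{(1+P_T)(\cdots+L^2S)}\le \frac{L}{4}S+\O(1+P_T)$ by AM--GM, where $S=\sum_t\|\x_t-\x_{t-1}\|^2$, and the term $\frac{L}{4}S$ is absorbed by $-\frac{L}{2}S$. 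A constant-factor mismatch here could be fixed by taking $\gamma=cL$.

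The main obstacle is Step 1: showing rigorously that the uncorrected \Swordo retains the optimistic bound \pref{eq:dynamic-optimistic} uniformly in the unknown $P_T$. If needed, I would re-derive the meta and base bounds directly rather than cite Theorem 9 verbatim.
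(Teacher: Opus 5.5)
Your proposal is correct and matches the paper's proof. The paper likewise obtains \pref{eq:dynamic-optimistic} for \Swordo by setting the correction coefficient $\lambda=0$ in Theorem~9 of \citet{zhao2024adaptivity}, then invokes \pref{thm:optimistic-gradient-variation-reduction}; your sketch of the meta/base bounds and the $\O(\log T)$ step-size grid simply unpacks that citation, and your fallback of taking $\gamma=cL$ is unnecessary, since AM--GM absorbs any constant in front of $\sqrt{(1+P_T)L^2S}$.
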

\pref{cor:dynamic-gradient-variation} matches the gradient-variation dynamic regret established by \Swordpp in \citet{zhao2024adaptivity} with a simpler analysis. \Swordpp obtained the negative movement by combining negative terms from meta and base regret bounds, whereas we obtain it directly using the corrected gradient.

\section{Gradient-Variation Universal Online Learning}
\label{sec:universality}

For static regret, obtaining negative movement terms remains the main technical challenge in achieving gradient-variation bounds.
When the function class is known, negative movement terms are relatively easy to obtain. Prior analyses~\citep{chiang2012online,yang2014regret} can therefore establish the optimal gradient-variation bounds directly, without relying on \pref{prop:approach}.

In the universal setting, obtaining the required negative movement term is more challenging. To adapt to unknown function classes and curvature, universal online learning algorithms typically use a two-layer architecture: $N$ base learners are maintained for strongly convex, exp-concave, and convex losses, while a meta learner tracks the best-performing base learner~\citep{zhang2022simple,yan2023universal,yan2024simple}. Specifically, for $\lambda$-strongly convex and $\alpha$-exp-concave losses, these methods constructed candidate pools $\H^{\scvx}$ and $\H^{\exp}$ for the unknown parameters $\lambda$ and $\alpha$, respectively, using a geometric grid:
\begin{equation*}
    \H^{\exp} = \H^{\scvx} \triangleq \left\{ \frac{1}{T}, \frac{2}{T}, \frac{2^2}{T}, \cdots, \frac{2^{n-1}}{T} \right\},
\end{equation*}
where $n = \ceil{\log_2 T} + 1 = \O(\log T)$ is the number of candidates in each pool. One base learner is instantiated for each candidate $\lambda_i$ or $\alpha_i$, with the final base learner, indexed by $N$, assigned to convex losses. Let $\x_{t,i}$ denote the decision of base learner $i$ at round $t$, and let $p_{t,i}$ denote its weight assigned by the meta-algorithm. The meta decision is $\x_t=\sumN p_{t,i}\x_{t,i}$.

 For the meta algorithm, \citet{zhao2025adaptivity} observed that a \emph{second-order optimistic regret} bound with \emph{negative stability terms} is crucial for achieving gradient-variation universality. Specifically, for each base learner $i$, the meta regret should satisfy
\begin{equation}
    \label{eq:second-order-regret}
    \sumT r_{t,i} \le \O\left(\sqrt{\sumT (r_{t,i}-m_{t,i})^2}-\sumTT\norm{\p_t-\p_{t-1}}_1^2\right),
\end{equation}
where $r_{t,i}\define\inner{\g_t}{\x_t-\x_{t,i}}$ is the instantaneous meta regret relative to the $i$-th base learner, and $m_{t,i}$ is the corresponding optimism. For strongly convex and exp-concave base learners, setting $m_{t,i}=0$ allows the negative curvature terms to offset the meta regret. For example, for $\alpha$-exp-concave functions, the negative curvature term $-\frac{\alpha}{4}\sum_t r_{t,i}^2$ yields \smash{$\sqrt{\sum_t r_{t,i}^2}-\frac{\alpha}{4}\sum_t r_{t,i}^2 \le \O(1)$}.
For convex functions, no negative curvature terms are available, so the negative stability term $-\sumTT\norm{\p_t-\p_{t-1}}_1^2$ plays a key role in controlling the meta regret. To obtain the required guarantee, \citet{yan2023universal} designed a complex two-layer meta algorithm based on \textsc{MsMwC-Master}~\citep{chen2021impossible}. However, their regret bound for convex losses still contained an extra $\sqrt{\log V_T}$ factor relative to the optimal rate.

In \pref{prop:approach}, the negative movement term in \pref{eq:dynamic-app} can directly cancel the meta regret, without requiring additional negative terms in the meta regret bound. This simplifies the design of the meta algorithm: any algorithm with a \emph{second-order optimistic regret} bound can serve as the meta algorithm. In what follows, we use \omlprod~\citep{wei2016tracking} as the meta algorithm, for which deriving negative stability terms in the regret bound was previously difficult.

 \textbf{Meta Learner. }\omlprod updates the weight $\p_{t+1}$ as follows:
\begin{equation}
    \label{eq:omlprod}
    \begin{gathered}
       W_{t,i} = \sbr{W_{t-1,i} \cdot \exp\sbr{\epsilon_{t-1,i} r_{t,i} - \epsilon_{t-1,i}^2 (r_{t,i} - m_{t,i})^2}}^{\frac{\epsilon_{t,i}}{\epsilon_{t-1,i}}},\\
        p_{t+1,i} \propto \epsilon_{t,i} \cdot \exp(\epsilon_{t,i} m_{t+1,i}) \cdot W_{t,i},
    \end{gathered}
\end{equation}
where $r_{t,i}=\inner{\g_t}{\x_t-\x_{t,i}}$ is the instantaneous meta regret relative to the $i$-th base learner. The optimistic vector $\m_t \in \R^{N}$ is designed as
\begin{equation}
    \label{eq:Bregman-optimism}
    m_{t,i} =\inner{\g_{t-1}}{\x_{t-1} - \x_{t,i}} \mbox{ for } i =N, \mbox{ and } m_{t,i} = 0 \mbox{ for } i\neq N.
\end{equation}
Assuming that $\|\g_t\|\le H$ for all $t\in[T]$, we set the learning rates as
\begin{align}
\epsilon_{t,i}=\min\left\{\frac{1}{8HD},\,
\sqrt{\frac{\ln N}{4H^2D^2+\sum_{s=1}^t(r_{s,i}-m_{s,i})^2}}\right\},
\qquad \epsilon_{0,i}=\frac{1}{8HD}.
\label{eq:meta-learning-rate}
\end{align}

Following \citet{yan2024simple}, we construct the base learners as follows:

\textbf{Base Learner. }The base learners update their decisions $\{\x_{t,i}\}_{i=1}^{N}$ using \OOMD with their respective surrogate losses and optimisms. 
We construct the surrogate loss for each function class as follows:
\begin{align*}
    h^\cvx_{t,N}(\x)\define\inner{\g_t}{\x},\quad
    h_{t,i}^{\sc}(\x)
    \define \inner{\g_t}{\x}+\frac{\lambda_i}{4}\norm{\x-\x_t}^2,\quad
    h_{t,i}^{\exp}(\x)
    \define \inner{\g_t}{\x}+\frac{\alpha_i}{4}\inner{\g_t}{\x-\x_t}^2.
\end{align*}
Here, $\lambda_i$ and $\alpha_i$ are selected from the candidate pools $\H^{\sc}$ and $\H^{\exp}$, respectively.
For convex loss functions, we run optimistic online gradient descent~(\OOGD) on the surrogate loss function $h^\cvx_{t,N}(\cdot)$ with optimism $\nabla h_{t-1,N}^\cvx(\x_{t-1})$. For strongly convex and exp-concave loss functions, we use \OOGD and optimistic online Newton step~(\OONS), respectively, with surrogate loss functions $h_{t,i}^{\sc}(\cdot)$ and $h_{t,i}^{\exp}(\cdot)$
and corresponding optimisms $\nabla h_{t-1,i}^{\sc}(\x_{t-1,i})$ and $\nabla h_{t-1,i}^{\exp}(\x_{t-1,i})$.  For detailed update rules of differently configured OOMD, we refer readers to \pref{app:universal-bases}.

\begin{algorithm}[t]
    \caption{Universal Gradient-Variation Regret with Corrected Gradient}
    \label{alg:Universal}
    \begin{algorithmic}[1]
        \Require Base learner configurations $\{\mathcal{B}_i\}_{i \in [N]}\define \{\mathcal{B}_i^\scvx\}_{ i\in I^\scvx} \cup \{\mathcal{B}_i^\exp\}_{i \in I^\exp} \cup \{\mathcal{B}^{\cvx}\}$
        \State \textbf{Initialize}: $\M$~---~meta learner \omlprod with $W_{0,i} = \frac{1}{N}$ for all $i \in [N]$ \\
    \makebox[1.8cm]{} $\{\mathcal{B}_i\}_{i \in [N]}$~---~base learners as specified in \pref{app:universal-bases}
        \For{$t=1$ \textbf{to} $T$}
            \State Play $\x_t=\sumN p_{t,i}\x_{t,i}$, incur loss $f_t(\x_t)$, and observe $\nabla f_t(\x_t)$.
            \State Compute the corrected gradient $\g_t=\nabla f_t(\x_t)+\frac{G}{D}(\x_t-\x_{t-1})$.
              \State $\{\mathcal{B}_i\}_{i=1}^N$ update their own decisions to $\{\x_{t+1,i}\}_{i=1}^N$ using  $\g_t$
            \State For each $i\in[N]$, compute $r_{t,i}=\inner{\g_t}{\x_t-\x_{t,i}}$  and set the optimism $\m_{t+1}$ as in \pref{eq:Bregman-optimism}
            \State Send $\r_t$ and $\m_{t+1}$ to $\M$, and obtain $\p_{t+1} \in \Delta_N$ via \pref{eq:omlprod} and \pref{eq:meta-learning-rate}
        \EndFor
    \end{algorithmic}
\end{algorithm}

Following \pref{sec:approach}, we use the corrected gradient $\g_t=\nabla f_t(\x_t)+\gamma(\x_t-\x_{t-1})$ with $\gamma=G/D$.  We denote the strongly convex and exp-concave base learners by $\mathcal{B}_i^\scvx$ and $\mathcal{B}_i^\exp$, respectively, with corresponding index sets $I^\scvx$ and $I^\exp$. The convex base learner is denoted by $\mathcal{B}^\cvx$. We summarize our algorithm in \pref{alg:Universal} and present its universal guarantee in \pref{thm:universal}. The corresponding proof is provided in \pref{app:universal-proof}.
\begin{Theorem}
\label{thm:universal}
Under \pref{assum:boundedness} and \ref{assum:smoothness}, \pref{alg:Universal} achieves the following universal guarantee:
\begin{align*}
    \Reg_T\le
    \begin{cases}
        \O\sbr{\dfrac{1}{\lambda}\log V_T},
            &\text{for $\lambda$-strongly convex losses},\\[2mm]
        \O\sbr{\dfrac{d}{\alpha}\log V_T},
            &\text{for $\alpha$-exp-concave losses},\\[2mm]
        \O\sbr{\sqrt{V_T}},
            &\text{for convex losses}.
    \end{cases}
\end{align*}
\end{Theorem}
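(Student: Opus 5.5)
We follow the standard meta--base decomposition, but we use the negative movement term of \pref{prop:approach} (with $\u_t=\xs$ fixed, so the $\O(P_T)$ term vanishes and the boundary term is $\O(\gamma D^2)=\O(GD)$) as the single source of cancellation. Write $\g_t=\nabla f_t(\x_t)+\gamma(\x_t-\x_{t-1})$ with $\gamma=G/D$, so $\|\g_t\|\le H\define 2G$. For each curvature class we first lower-bound $f_t(\x_t)-f_t(\xs)$ by the linearized term minus curvature. For $\lambda$-strongly convex losses this is $\inner{\nabla f_t(\x_t)}{\x_t-\xs}-\frac\lambda2\|\x_t-\xs\|^2$, and analogously for exp-concave losses. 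We then apply the identity of \pref{prop:approach} to rewrite $\inner{\nabla f_t(\x_t)}{\x_t-\xs}$ as $\inner{\g_t}{\x_t-\xs}-\frac\gamma2\|\x_t-\x_{t-1}\|^2+(\text{telescoping})$. For the exp-concave case we must also relate $\inner{\nabla f_t(\x_t)}{\x_t-\xs}^2$ to $\inner{\g_t}{\x_t-\xs}^2$. For this we use $(a+b)^2\ge a^2/2-b^2$ with $b=\gamma\inner{\x_t-\x_{t-1}}{\x_t-\xs}$, $|b|\le G\|\x_t-\x_{t-1}\|$. The resulting $\alpha G^2\|\x_t-\x_{t-1}\|^2$ is absorbed into $\frac{\gamma}{4}\|\x_t-\x_{t-1}\|^2$ since $\alpha\le 1/(8GD)$. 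Then we split $\inner{\g_t}{\x_t-\xs}=\sum_t r_{t,i^\star}+\inner{\g_t}{\x_{t,i^\star}-\xs}$, where $i^\star$ is the base learner whose grid parameter lies within a factor $2$ of the true curvature. This gives $\Reg_T\le \MetaReg+\BaseReg-\frac{\gamma}{4}\sumTT\|\x_t-\x_{t-1}\|^2-(\text{curvature on }r_{t,i^\star})+\O(1)$. The curvature term on $r_{t,i^\star}$ comes from the standard surrogate trick: the surrogate $h_{t,i}$ absorbs half of the curvature, and the other half stays as $-\frac{\lambda}{4}\|\x_t-\x_{t,i}\|^2\le -\frac{\lambda}{4H^2}r_{t,i}^2$, or $-\frac{\alpha}{4}r_{t,i}^2$.

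\textbf{Meta regret.} We use the second-order optimistic bound of \omlprod with the learning rates \pref{eq:meta-learning-rate}: $\sum_t r_{t,i}\lesssim \sqrt{\ln N\sum_t(r_{t,i}-m_{t,i})^2}+\ln N$, treating $\ln N=\O(\log\log T)$ as a constant. For $i\ne N$ we have $m_{t,i}=0$, and AM--GM against the negative $-c\,r_{t,i}^2$ term gives $\O(1/\lambda)$ or $\O(1/\alpha)$. For $i=N$ we have $r_{t,N}-m_{t,N}=\inner{\g_t-\g_{t-1}}{\x_t-\x_{t,N}}+\inner{\g_{t-1}}{\x_t-\x_{t-1}}$, so $\sum(r_{t,N}-m_{t,N})^2\lesssim D^2\Vb'_T+H^2\sumTT\|\x_t-\x_{t-1}\|^2$, where $\Vb'_T\define\sum\|\g_t-\g_{t-1}\|^2$. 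By the \Correct decomposition \pref{eq:method-movement-decomposition}, extended with the $\gamma$-terms, $\Vb'_T\lesssim V_T+(L^2+\gamma^2)\sum\|\x_t-\x_{t-1}\|^2+\gamma^2\sum\|\x_{t-1}-\x_{t-2}\|^2$. After taking square roots, AM--GM turns $\sqrt{C\sum\|\x_t-\x_{t-1}\|^2}$ into $\frac{\gamma}{8}\sum\|\x_t-\x_{t-1}\|^2+\O(C/\gamma)$, which the negative movement term cancels. This leaves $\O(\sqrt{V_T})$.

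\textbf{Base regret.} \OOGD/\OONS on the surrogates with optimism equal to the previous surrogate gradient give the usual bounds in terms of $\sum\|\nabla h_{t,i}(\x_{t,i})-\nabla h_{t-1,i}(\x_{t-1,i})\|^2$, minus stability terms $\sum\|\x_{t,i}-\x_{t-1,i}\|^2$. These bounds are $\O(\frac1\lambda\log(\cdot))$, $\O(\frac d\alpha\log(\cdot))$, and $\O(\sqrt{\cdot})$ respectively. The surrogate gradient difference involves $\g_t-\g_{t-1}$, plus curvature pieces such as $\lambda(\x_{t,i}-\x_{t,i-1})$ and $\lambda(\x_t-\x_{t-1})$. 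It is thus bounded by $\Vb'_T$ plus movements of both the base iterates and the meta iterate. The base movements are cancelled by the base stability terms. In the logarithmic cases, the meta movement sits inside a log: $\log(1+V_T+\sum\|\x_t-\x_{t-1}\|^2)\le\log(1+V_T)+\log(1+M)$ with $M\define\sum\|\x_t-\x_{t-1}\|^2$. We then use $\frac{c}{\lambda}\log(1+M)\le \frac{\gamma}{8}M+\O(\frac1\lambda\log\frac{1}{\lambda\gamma})$, whose last term is $\O(\frac1\lambda\log T)$ only in the worst case. More carefully, $\log(1+M)$ is dominated by $\gamma M$ up to an additive $\O(\frac{1}{\lambda}\log\frac{1}{\lambda})$ constant independent of $T$. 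For the convex base, the same square-root AM--GM absorption applies. Summing meta, base, and the leftover negative movement, and checking that the total negative budget $\frac{\gamma}{2}\sum\|\x_t-\x_{t-1}\|^2$ covers all absorptions, gives the three claimed rates.

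\textbf{Main obstacle.} The delicate step is the movement bookkeeping. The meta iterate's movement appears in both the meta regret and all base regrets, multiplied by constants such as $L^2D^2$ and $\gamma^2D^2$, and each must be absorbed by the single term $\frac{\gamma}{2}\sum\|\x_t-\x_{t-1}\|^2$. Note that $L$ enters only through the $\O(L^2/\gamma)$ additive constants, which is fine, but the $\gamma^2$-terms produce $\O(\gamma)$ constants. We need to verify that the exp-concave transfer, which uses $\alpha\gamma D^2 G\lesssim\gamma$, and the log-case absorption really leave $T$-independent remainders, so that the bound is $\log V_T$ rather than $\log T$.
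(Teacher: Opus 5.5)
Your skeleton is the paper's. It uses \pref{prop:approach} with a fixed comparator, the transfer $(q-b)^2\ge q^2/2-b^2$ with $\alpha GD\le 1/8$, and the half-curvature surrogates. It also uses the second-order bound of \omlprod with $m_{t,i}=0$ plus AM--GM for the curved bases, the split $r_{t,N}-m_{t,N}=\inner{\g_t-\g_{t-1}}{\x_t-\x_{t,N}}+\inner{\g_{t-1}}{\x_t-\x_{t-1}}$, and $\Vt_T\le 3V_T+(3L^2+12\gamma^2)S_T$ with $S_T=\sumTT\norm{\x_t-\x_{t-1}}^2$. The convex case goes through exactly as you describe.

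\textbf{Where you depart, and the gap.} The one real departure is the base regret of the curved learners, and there your step is not justified. You split the curvature part of $\nabla h_{t,\is}(\x_{t,\is})-\nabla h_{t-1,\is}(\x_{t-1,\is})$ into base movement $\x_{t,\is}-\x_{t-1,\is}$ plus meta movement. You then assert that the base movement is cancelled by the base learner's own stability terms. That forces you to reopen the OOMD analyses, and for \OONS with $U_{t,i}=(1+\alpha_iH^2/2)I_d+\cdots$ the direct comparison does not close in general. The piece $\frac{\alpha}{2}\g_{t-1}\inner{\g_{t-1}}{\x_{t,\is}-\x_{t-1,\is}}$ costs up to order $\frac{\alpha^2H^4}{1+\alpha H^2/2}\norm{\x_{t,\is}-\x_{t-1,\is}}^2$ in the $U_t^{-1}$-norm. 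The stability term supplies only a small absolute constant times $(1+\alpha H^2/2)\norm{\x_{t,\is}-\x_{t-1,\is}}^2$. So cancellation needs $\alpha H^2$ below an absolute constant, whereas $\alpha\le 1/(8GD)$ only gives $\alpha H^2\le G/(2D)$. For \OOGD with $\eta_t=4/(2+\lambda t)$ your route does work, since $\eta_t\lambda\lesssim 1$ fails only in $\O(1)$ early rounds. Note also that you spent all of the meta-regret curvature in AM--GM, so nothing is left to fall back on.

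\textbf{How the paper avoids it.} The paper never creates base movement; it bounds the curvature part directly by the meta--base gap. For strongly convex losses this gives $\frac{3\lambda_\is^2}{2}\sumT\norm{\x_t-\x_{t,\is}}^2$. For exp-concave losses, via $\nabla h_{t,\is}(\x_{t,\is})=\g_t(1-\alpha_\is r_{t,\is}/2)$, it gives $\frac{3\alpha_\is^2H^2}{2}\sumT r_{t,\is}^2$. The paper spends only half of the meta curvature in AM--GM, keeping $-\frac{\lambda_\is}{8}\sumT\norm{\x_t-\x_{t,\is}}^2$ (resp.\ $-\frac{\alpha_\is}{8}\sumT r_{t,\is}^2$) to absorb these terms inside the logarithm. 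Meanwhile $-\frac\gamma2 S_T$ (resp.\ $-\frac\gamma4 S_T$) absorbs the $S_T$ part, so the base bounds are used as black boxes.

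\textbf{Your ``main obstacle''.} This also settles it. The base bounds have the form $\frac1\lambda\log(1+\lambda\,\cdot)$ and $\frac d\alpha\log(1+\frac\alpha d\,\cdot)$, not $\frac1\lambda\log(1+\cdot)$. Moreover, $\max_{S\ge0}\{\frac c\lambda\log(1+\lambda CS)-\frac\gamma2 S\}\le\frac c\lambda\log\max\{1,2cC/\gamma\}=\O(1/\lambda)$, with no $\log(1/\lambda)$ or $\log T$ leakage. The same computation handles the meta--base-gap term, using $\lambda_\is\le 1$ and $\alpha_\is H^2\le G/(2D)$.
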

We highlight the advantages of \pref{thm:universal} in the following three aspects:

\ding{172}\ \textbf{Regret Bounds.} \pref{thm:universal} improves the regret bound for convex losses in \citet{yan2023universal} from $\O(\sqrt{V_T\log V_T})$ to the optimal rate $\O(\sqrt{V_T})$, while retaining optimal logarithmic regret bounds for strongly convex and exp-concave losses under the same assumptions. We also match the optimal rates of \citet{yan2024simple} while removing their assumption of smoothness on an extended domain.

\ding{173}\ \textbf{Efficiency.} \pref{alg:Universal} uses only $\O(\log T)$ base learners and one gradient query per round. Compared with \citet{yan2023universal}, it reduces the number of base learners from $\O(\log^2 T)$ to $\O(\log T)$. It matches \citet{yan2024simple} in both the number of base learners and gradient queries, while reducing computational overhead: our choice of optimism avoids their one-dimensional fixed-point computation, which requires $\O(\log T)$ iterations per round.

\ding{174}\ \textbf{Analysis.} Our analysis uses a simple corrected gradient to obtain the negative movement term in \pref{eq:dynamic-app} directly. This term allows us to cancel the meta regret without requiring additional negative stability terms in the meta regret bound, avoiding the analysis of cascaded correction terms in \citet{yan2023universal}. It also supports simpler optimism, removing the need to analyze the one-dimensional fixed-point problem in \citet{yan2024simple}.

\section{Application to Fast Convergence in Online Games}
\label{sec:applications-games}
In this section, we apply \pref{thm:universal} to online games and provide a detailed case showing how our negative movement term in \pref{prop:approach} can be used to facilitate fast convergence in games.

We study a two-player zero-sum game of the form $\min_{\x\in\X}\max_{\y\in\Y}f(\x,\y)$.
The $\x$-player minimizes the payoff, while the $\y$-player maximizes it.
To demonstrate universality, we consider two classes of games: bilinear games with $f(\x,\y)=\x^\top A\y$, and strongly convex--strongly concave games in which $f$ is $\lambda$-strongly convex in $\x$ and $\lambda$-strongly concave in $\y$.
At each round $t$, the players choose their strategies simultaneously and then observe their respective loss gradients $\nabla_{\x}f(\x_t,\y_t)$ and $-\nabla_{\y}f(\x_t,\y_t)$. We call the setting honest when both players run the same agreed algorithm in a distributed manner, and dishonest when at least one player deviates from the agreed protocol. In this setting, we use the following standard boundedness and smoothness conditions.

\begin{Assumption}[Strategy sets and payoff gradients]
    \label{ass:game}
    The strategy sets $\X$ and $\Y$ are nonempty compact convex sets with diameter at most $D$, and both partial gradients $\nabla_{\x}f(\x,\y)$ and $\nabla_{\y}f(\x,\y)$ have norm at most $G$.
    For all feasible pairs $(\x,\y)$ and $(\x',\y')$, the payoff satisfies
    \begin{align*}
        \|\nabla_{\x}f(\x,\y)-\nabla_{\x}f(\x',\y')\|
            &\le L(\|\x-\x'\|+\|\y-\y'\|),\\
        \|\nabla_{\y}f(\x,\y)-\nabla_{\y}f(\x',\y')\|
            &\le L(\|\x-\x'\|+\|\y-\y'\|).
     \end{align*}
\end{Assumption}
The pioneering work of \citet{syrgkanis2015fast} showed that, in the honest setting, the sum of players' regrets is bounded by $\O(1)$ when each player runs an algorithm satisfying the RVU property defined in \pref{def:RVU}. This bound enables efficient computation of Nash equilibria. Specifically, for bilinear games, setting the learning rates to $\eta^\x=\eta^\y=\sqrt{\frac{\omega}{L^2\beta}}$ in the RVU bounds yields
\begin{align*}
 \Reg_T^\x+\Reg_T^\y\lesssim {} &  \frac{\alpha}{\eta^\x}+\frac{\alpha}{\eta^\y}+(L^2\beta\eta^\y-\frac{\omega}{\eta^\x})\sumTT\|\x_t-\x_{t-1}\|_1^2\\
 {}&\qquad\qquad+(L^2\beta\eta^\x-\frac{\omega}{\eta^\y})\sumTT\|\y_t-\y_{t-1}\|_1^2\le \O(1),
\end{align*}
where the first step uses the smoothness of the payoff function. In the second step, the negative movement term in the RVU property plays a crucial role in cancellation.

For \pref{alg:Universal}, the negative movement term in \pref{prop:approach} ensures the RVU property, allowing the algorithm to be applied directly to this setting. In both the honest and dishonest settings, \pref{alg:Universal} achieves \emph{state-of-the-art} performance. We provide the proof in \pref{app:applications}.

\begin{table}[!t]
    \centering
    \caption{\small{Comparison of our regret guarantees with existing results. Honest play is measured by the sum of the players' regrets, and dishonest play by each player's individual regret. Both bilinear and strongly convex--strongly concave games are considered.}}
    \label{tab:game}
    \vspace{1mm}
    \begingroup
    \small
    \renewcommand{\arraystretch}{1.5}
    \setlength{\tabcolsep}{4pt}
    \resizebox{0.95\linewidth}{!}{%
    \begin{tabular}{c|c|cccc}
        \hline
        \hline
        & \textbf{Games}
        & \makecell{\citeauthor{syrgkanis2015fast}\\\citeyearpar{syrgkanis2015fast}}
        & \makecell{\citeauthor{zhang2022simple}\\\citeyearpar{zhang2022simple}}
        & \makecell{\citeauthor{yan2023universal}\\\citeyearpar{yan2023universal}}
        & \textbf{Ours}\\
        \hline
        \multirow{2}{*}{\textbf{Honest}}
        & Bilinear & $\O(1)$ & $\O(\sqrt T)$ & $\O(1)$ & $\O(1)$\\
        \cline{2-6}
        & \makecell[c]{Strongly convex--\\strongly concave}
        & $\O(1)$ & $\O(\lambda^{-1}\log T)$ & $\O(1)$ & $\O(1)$\\
        \hline
        \multirow{2}{*}{\textbf{Dishonest}}
        & Bilinear & $\O(\sqrt T)$ & $\O(\sqrt T)$ & $\O(\sqrt{T\log T})$ & $\O(\sqrt T)$\\
        \cline{2-6}
        & \makecell[c]{Strongly convex--\\strongly concave}
        & $\O(\sqrt T)$ & $\O(\lambda^{-1}\log T)$ & $\O(\lambda^{-1}\log T)$ & $\O(\lambda^{-1}\log T)$\\
        \hline
        \hline
    \end{tabular}}
    \endgroup
\end{table}
\begin{Theorem}
    \label{thm:game}
    Under \pref{ass:game}, for both bilinear and strongly convex--strongly concave games, \pref{alg:Universal} enjoys $\O(1)$ regret summation in the honest case:
    \begin{equation*}
        \Reg_T^{\x}+\Reg_T^{\y}\le\O(1).
    \end{equation*}
In the dishonest case, the algorithm achieves individual regret bounds of $\O(\sqrt T)$ for bilinear games and $\O(\lambda^{-1}\log T)$ for strongly convex--strongly concave games.
\end{Theorem}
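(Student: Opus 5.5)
The plan is to derive \pref{thm:game} from a finer regret guarantee of \pref{alg:Universal}. That guarantee should have the shape of the RVU property but be measured in Euclidean norms, and it should come out of the proof of \pref{thm:universal}. Concretely, I would first extract from \pref{app:universal-proof} the following statement for each player: for any gradient sequence $\{\nabla_t\}$ with $\|\nabla_t\|\le G$ (no smoothness in time assumed),
\begin{align*}
\Reg_T\lesssim 1+\sqrt{\sumTT\norm{\nabla_t-\nabla_{t-1}}^2}-\gamma'\sumTT\norm{\x_t-\x_{t-1}}^2
\end{align*}
in the convex case, and $\O(\lambda^{-1}\log(1+\sum_t\norm{\nabla_t-\nabla_{t-1}}^2))$ in the $\lambda$-strongly convex case. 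Here $\gamma'$ is a constant proportional to $\gamma=G/D$.

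To get this, I would apply \pref{prop:approach} with the constant comparator, so that $P_T=0$. The negative movement term $-\frac{\gamma}{2}\sum\norm{\x_t-\x_{t-1}}^2$ is used only partially: half of it cancels the $\gamma^2\sum\norm{\x_t-\x_{t-1}}^2$ terms that arise when bounding $\norm{\g_t-\g_{t-1}}^2$ and the meta/base optimism errors. The other half is kept as surplus. The important point is that this step bounds everything by the empirical variation $\Vb_T$ rather than by $V_T$, so smoothness of the opponent-induced loss sequence is never needed.

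For the honest case I would sum the two players' bounds. By \pref{ass:game}, $\norm{\nabla_\x f(\x_t,\y_t)-\nabla_\x f(\x_{t-1},\y_{t-1})}^2\le 2L^2(\norm{\x_t-\x_{t-1}}^2+\norm{\y_t-\y_{t-1}}^2)$, and symmetrically for $\y$. Writing $S=\sum_t(\norm{\x_t-\x_{t-1}}^2+\norm{\y_t-\y_{t-1}}^2)$, the sum of the regrets is at most $\O(1)+c L\sqrt{S}-\gamma' S$, and since $\sup_S\{cL\sqrt{S}-\gamma' S\}=c^2L^2/(4\gamma')=\O(1)$, the total is $\O(1)$.

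This square-root form differs from the linear-in-$\eta$ RVU form, and it is the main obstacle. Plain AM--GM suffices for bilinear games, where both regrets are at most the convex-case bound. In strongly convex--strongly concave games, however, the strong-convexity base learner only supplies the logarithmic bound. I would handle this as follows. Universality implies that each player's regret is at most the minimum of its convex-case and strongly-convex-case bounds, and the convex-case bound with surplus movement holds regardless of curvature, because the convex base learner is always present in the meta regret bound. I therefore intend to use the convex branch for the honest sum in both game classes, which again gives $\O(1)$. The delicate check is that the meta regret against the convex base learner, with the optimism in \pref{eq:Bregman-optimism}, indeed scales with $\sqrt{\Vb_T}$ and leaves a surplus negative movement term. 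I would verify this by reusing the convex-case chain in \pref{app:universal-proof} while keeping the constants explicit.

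For the dishonest case I would treat the opponent's sequence as arbitrary. In a bilinear game the empirical variation is at most $4G^2T$, so the convex-case bound after dropping the negative term gives $\O(\sqrt T)$. In the strongly convex case, the logarithmic bound applied with $\Vb_T\le 4G^2T$ gives $\O(\lambda^{-1}\log T)$. No smoothness in time is required for either, since both bounds are stated in terms of $\Vb_T$.
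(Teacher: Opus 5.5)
Your proposal is correct and follows essentially the same route as the paper. For the honest case, both arguments use the convex-branch meta and base bounds (valid for both game classes) expressed through the empirical variation. The $\sqrt{S_T}$ terms are absorbed by the negative movement from \pref{prop:approach}. Then $\Vb_T^{\x}$ and $\Vb_T^{\y}$ are bounded by $S_T^{\x}+S_T^{\y}$ via the joint smoothness in \pref{ass:game}.

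For the dishonest case, the paper invokes \pref{thm:universal} directly with $V_T=\O(T)$, while you argue through $\Vb_T\le 4G^2T$; this difference is immaterial.
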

\pref{tab:game} compares our guarantees with prior work.
Under honest play, \pref{thm:game} matches the $\O(1)$ regret sums of \citet{yan2023universal} for both game classes, improving upon the corresponding bounds of \citet{zhang2022simple}.
Under dishonest play, it matches the individual regret bounds of \citet{zhang2022simple} for both game classes and improves the dishonest-play bound of \citet{yan2023universal} from $\O(\sqrt{T\log T})$ to $\O(\sqrt T)$ for bilinear games.

\section{Conclusion}
\label{sec:Summary}
We propose a \emph{simple} and \emph{unified} method for obtaining negative terms in online regret bounds. These terms are key to establishing gradient-variation guarantees and fast convergence in games. In the non-stationary setting, we establish a \emph{black-box reduction} from optimistic regret bounds to gradient-variation bounds. This reduction enables a broad class of algorithms to recover the gradient-variation dynamic regret guarantee $\O(\sqrt{(1+V_T+P_T)(1+P_T)})$ established by \citet{zhao2024adaptivity}.
Our method yields a \emph{simple} and \emph{efficient} universal algorithm with regret bounds of $\O(\sqrt{V_T})$, $\O(\frac{d}{\alpha}\log V_T)$, and $\O(\frac{1}{\lambda}\log V_T)$ for convex, $\alpha$-exp-concave, and $\lambda$-strongly convex losses, respectively. It matches the \emph{optimal} rates of \citet{yan2024simple} without their extended smoothness assumption, uses only $\O(\log T)$ base learners and one gradient query per round, and incurs lower computational cost per round. We further apply the resulting universal guarantee to online games and improve on existing guarantees~\citep{zhang2022simple,yan2023universal}.

\newpage
\section*{AI-use Statement}
The authors developed the idea of introducing a negative movement term with assistance from GPT-6 Astra.
Building on this idea, they developed new algorithms for gradient-variation non-stationary online learning and obtained improved results for gradient-variation universal online learning.
GPT-6 Astra was also used to assist with reviewing and polishing the manuscript.
The authors take full responsibility for the content and correctness of the paper.
\bibliography{utility/online}
\newpage

\appendix

\section{Omitted Details in \pref{sec:approach}}
\label{app:approach}
In this section, we provide the proof of \pref{prop:approach} in \pref{app:movement-transfer} and its extension to general Bregman divergences in \pref{app:bregman-movement}. 
\subsection{Proof of \pref{prop:approach}}
\label{app:movement-transfer}
\begin{proof}
Recall that $\v_t\define \nabla f_t(\x_t)+\gamma (\x_t-\x_{t-1})$. We have
\begin{align}
  &\sum_{t=1}^T\inner{\nabla f_t(\x_t)}{\x_t-\u_t}\notag
  =\sum_{t=1}^T\inner{\v_t}{\x_t-\u_t}
  -\gamma\sum_{t=1}^T\inner{\x_t-\x_{t-1}}{\x_t-\u_t}\notag\\
  ={}&\sum_{t=1}^T\inner{\v_t}{\x_t-\u_t}
  -\frac\gamma2\sumTT\norm{\x_t-\x_{t-1}}^2
  -\frac\gamma2\sum_{t=1}^T
  \left(\norm{\x_t-\u_t}^2-\norm{\x_{t-1}-\u_t}^2\right)\notag\\
  \le{}&\sum_{t=1}^T\inner{\v_t}{\x_t-\u_t}
  -\frac\gamma2\sumTT\norm{\x_t-\x_{t-1}}^2+\frac{\gamma D^2}{2}
  -\frac\gamma2\sum_{t=2}^T
  \left(\norm{\x_{t-1}-\u_{t-1}}^2-\norm{\x_{t-1}-\u_t}^2\right)\notag\\
  \le{}&\sum_{t=1}^T\inner{\v_t}{\x_t-\u_t}
  -\frac\gamma2\sumTT\norm{\x_t-\x_{t-1}}^2+\frac{\gamma D^2}{2}+\gamma DP_T,\label{eq:prop1}
\end{align}
where the second step follows from the three-point identity $2\inner{\a-\b}{\a-\c}
  =\norm{\a-\c}^2-\norm{\b-\c}^2+\norm{\a-\b}^2$, for $\a,\b,\c\in\R^d$, the third step uses the boundedness assumption and the last inequality follows from $\|\a\|^2-\|\b\|^2\le \|\a+\b\|\cdot\|\a-\b\|$ for any $\a,\b\in\R^d$.
\end{proof}
\subsection{Extension to Bregman Movement}
\label{app:bregman-movement}
\begin{Proposition}
Under \pref{assum:boundedness}, let $f_t$ be convex for every $t\in[T]$ and let $\psi$ be differentiable, convex, and $L^\prime$-smooth on $\X$. Set $\x_0=\x_1$ and let $\v_t\define\nabla f_t(\x_t)+\gamma\sbr{\nabla\psi(\x_t)-\nabla\psi(\x_{t-1})}$ be the corrected gradient.
For any $\gamma>0$ and every decision sequence $\x_{1:T}\in\X^T$ and comparator sequence $\u_{1:T}\in\X^T$, we have
\begin{align*}
   \sumT \inner{\nabla f_t(\x_t)}{\x_t-\u_t}
   \le {}&\sumT\inner{\v_t}{\x_t-\u_t}
    -\gamma\sumTT\D_{\psi}(\x_t,\x_{t-1})+\frac{\gamma L^\prime D^2}{2}+\gamma L^\prime DP_T,
\end{align*}
where $P_T\define\sum_{t=2}^T\norm{\u_t-\u_{t-1}}$ is the comparator path length.    
\end{Proposition}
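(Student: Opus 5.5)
The plan is to mirror the proof of \pref{prop:approach}, replacing the Euclidean three-point identity with the Bregman three-point identity. First, by the definition of $\v_t$ we write
\begin{align*}
  \sumT\inner{\nabla f_t(\x_t)}{\x_t-\u_t}
  =\sumT\inner{\v_t}{\x_t-\u_t}
  -\gamma\sumT\inner{\nabla\psi(\x_t)-\nabla\psi(\x_{t-1})}{\x_t-\u_t}.
\end{align*}
The three-point identity for Bregman divergences gives $\inner{\nabla\psi(\x_t)-\nabla\psi(\x_{t-1})}{\x_t-\u_t}=\D_\psi(\u_t,\x_t)-\D_\psi(\u_t,\x_{t-1})+\D_\psi(\x_t,\x_{t-1})$, which is checked directly from the definition of $\D_\psi$. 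Since $\x_0=\x_1$, the $t=1$ movement term vanishes. This produces the negative movement $-\gamma\sumTT\D_\psi(\x_t,\x_{t-1})$ and leaves the telescoping-type remainder $-\gamma\sumT\bigl(\D_\psi(\u_t,\x_t)-\D_\psi(\u_t,\x_{t-1})\bigr)$.

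Next, we reindex the remainder. Shifting the index in the second sum, it equals
\begin{align*}
\gamma\D_\psi(\u_1,\x_0)-\gamma\D_\psi(\u_T,\x_T)+\gamma\sum_{t=2}^T\bigl(\D_\psi(\u_t,\x_{t-1})-\D_\psi(\u_{t-1},\x_{t-1})\bigr).
\end{align*}
We drop $-\gamma\D_\psi(\u_T,\x_T)\le0$, which is valid because $\psi$ is convex. The boundary term is bounded by $L'$-smoothness on $\X$: $\D_\psi(\u_1,\x_0)\le\frac{L'}{2}\norm{\u_1-\x_0}^2\le\frac{L'D^2}{2}$. This uses the standard upper bound for smooth functions, which only requires $\nabla\psi$ to be Lipschitz along the segment, and that segment lies in the convex set $\X$.

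The remaining step, and the main point requiring care, is to bound each difference $\D_\psi(\u_t,\x_{t-1})-\D_\psi(\u_{t-1},\x_{t-1})$ by $L'D\norm{\u_t-\u_{t-1}}$. Writing $\y=\x_{t-1}$, this difference equals $\psi(\u_t)-\psi(\u_{t-1})-\inner{\nabla\psi(\y)}{\u_t-\u_{t-1}}$. By the mean value theorem along the segment $[\u_{t-1},\u_t]\subseteq\X$, it equals $\inner{\nabla\psi(\z)-\nabla\psi(\y)}{\u_t-\u_{t-1}}$ for some $\z\in\X$. Cauchy--Schwarz together with $\norm{\nabla\psi(\z)-\nabla\psi(\y)}\le L'\norm{\z-\y}\le L'D$ then gives the desired bound. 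Summing over $t$ yields $\gamma L'DP_T$.

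Collecting the three pieces gives exactly the claimed inequality. Smoothness is only ever invoked at points of $\X$, which is consistent with the paper's emphasis on avoiding extended-domain smoothness. Taking $\psi=\frac12\norm{\cdot}^2$ with $L'=1$ recovers \pref{prop:approach} as a sanity check.
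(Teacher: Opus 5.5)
Your proof is correct and follows the paper's argument essentially step for step. Both use the Bregman three-point identity, reindex the remainder, bound the boundary term $\D_\psi(\u_1,\x_0)\le L'D^2/2$ by smoothness, drop $-\gamma\D_\psi(\u_T,\x_T)\le 0$, and bound each difference $\D_\psi(\u_t,\x_{t-1})-\D_\psi(\u_{t-1},\x_{t-1})$ by $L'D\norm{\u_t-\u_{t-1}}$; the only addition is your mean-value-theorem justification of that last bound, which the paper simply asserts.
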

\begin{proof}
Recall that $\v_t\define\nabla f_t(\x_t)+\gamma\sbr{\nabla\psi(\x_t)-\nabla\psi(\x_{t-1})}$ and $\x_0=\x_1$. We have
\begin{align*}
  &\sumT\inner{\nabla f_t(\x_t)}{\x_t-\u_t}
  =\sumT\inner{\v_t}{\x_t-\u_t}-\gamma\sumT\inner{\nabla\psi(\x_t)-\nabla\psi(\x_{t-1})}{\x_t-\u_t}\\
  ={}&\sumT\inner{\v_t}{\x_t-\u_t}
    -\gamma\sumTT\D_\psi(\x_t,\x_{t-1})-\gamma\sumT\sbr{\D_\psi(\u_t,\x_t)-\D_\psi(\u_t,\x_{t-1})}\\
  \le{}&\sumT\inner{\v_t}{\x_t-\u_t}
    -\gamma\sumTT\D_\psi(\x_t,\x_{t-1})
    +\frac{\gamma L^\prime D^2}{2}-\gamma\sumTT\sbr{\D_\psi(\u_{t-1},\x_{t-1})-\D_\psi(\u_t,\x_{t-1})}\\
  \le{}&\sumT\inner{\v_t}{\x_t-\u_t}
    -\gamma\sumTT\D_\psi(\x_t,\x_{t-1})+\frac{\gamma L^\prime D^2}{2}+\gamma L^\prime DP_T,
\end{align*}
where the second equality follows from the Bregman three-point identity
$\inner{\nabla\psi(\a)-\nabla\psi(\b)}{\a-\c}
  =\D_\psi(\c,\a)-\D_\psi(\c,\b)+\D_\psi(\a,\b)$ for $\a,\b,\c\in\R^d$, the third inequality uses smoothness, and the last inequality uses $
|\D_\psi(\u,\x)-\D_\psi(\u',\x)|\le L^\prime D\norm{\u-\u'}$ for all $\u,\u',\x\in\X$.
\end{proof}

\section{Omitted Proofs in \pref{sec:nonstationary}}
\label{app:nonstationary}
In this section, we provide the omitted proofs for \pref{thm:optimistic-gradient-variation-reduction} and \pref{cor:dynamic-gradient-variation}.

\subsection{Proof for \pref{thm:optimistic-gradient-variation-reduction}}
\label{app:optimistic-gradient-variation-reduction}
\begin{proof}
For the first choice of $\m_t$ in \pref{eq:mtchoice}, we have
\begin{align*}
    &\sumT\norm{\g_t-\m_t}^2=\sumT \|\nabla f_t(\x_t)-\nabla f_{t-1}(\x_{t-1})+L(\x_t-\x_{t-1})-L(\x_{t-1}-\x_{t-2})\|^2\notag\\
    \le {}&G^2+3\sumTT\norm{\nabla f_t(\x_t)-\nabla f_{t-1}(\x_{t-1})}^2+12L^2\sumTT\norm{\x_t-\x_{t-1}}^2\notag\\
    \le {}&G^2+6V_T+18L^2\sumTT\norm{\x_t-\x_{t-1}}^2.
\end{align*}

For the second choice of $\m_t$ in \pref{eq:mtchoice}, we have
\begin{align*}
    &\sumT\norm{\g_t-\m_t}^2=\sumT \|\nabla f_t(\x_t)-\nabla f_{t-1}(\x_{t-1})+L(\x_t-\x_{t-1})\|^2\notag\\
    \le {}&G^2+2\sumTT\norm{\nabla f_t(\x_t)-\nabla f_{t-1}(\x_{t-1})}^2+2L^2\sumTT\norm{\x_t-\x_{t-1}}^2\notag\\
    \le {}&G^2+4V_T+6L^2\sumTT\norm{\x_t-\x_{t-1}}^2.
\end{align*}

For both choices, plugging the bound of $\sumT\norm{\g_t-\m_t}^2$ into \pref{eq:dynamic-optimistic}, we have
\begin{align*}
   \sumT  \inner{\g_t}{\x_t-\u_t}
    \lesssim \sqrt{(1+P_T)\sbr{1+P_T+V_T+L^2\sumTT\norm{\x_t-\x_{t-1}}^2}}.
\end{align*}
Plugging it back to \pref{eq:prop1} with the choice of $\gamma=L$, we have
\begin{align*}
    &\DReg_T(\u_{1:T})\le\sumT\inner{\nabla f_t(\x_t)}{\x_t-\u_t}\notag\\
    \le{}&\sum_{t=1}^T\inner{\g_t}{\x_t-\u_t}-\frac{L}{2}\sumTT\norm{\x_t-\x_{t-1}}^2+\frac{L D^2}{2}+L DP_T,\notag\\
    \le{}& \O(\sqrt{(1+P_T)\sbr{1+P_T+V_T}})+\O\sbr{\sqrt{(1+P_T)\sumTT\norm{\x_t-\x_{t-1}}^2}}-\frac{L}{2}\sumTT\norm{\x_t-\x_{t-1}}^2\\
    \le{}&\O(\sqrt{(1+P_T)\sbr{1+P_T+V_T}}),
\end{align*}
where the last inequality follows from $2\sqrt{ab}-a\le b$ for $a,b\ge0$.
\end{proof}

\subsection{Proof of \pref{cor:dynamic-gradient-variation}}
\label{app:dynamic-gradient-variation}
\begin{proof}
Setting $\lambda=0$ in Theorem 9 of \citet{zhao2024adaptivity} shows that \Swordo satisfies \pref{eq:dynamic-optimistic}. Applying \pref{thm:optimistic-gradient-variation-reduction} then completes the proof.
\end{proof}

\section{Omitted Details in \pref{sec:universality}}
\label{app:universality}
In this section, we provide the omitted details of the base learners in \pref{alg:Universal} in \pref{app:universal-bases} and prove \pref{thm:universal} in \pref{app:universal-proof}.

\subsection{Base Learners of \pref{alg:Universal}}
\label{app:universal-bases}
Following the conversion stated after \pref{def:expconcave}, we use the effective exp-concavity coefficient $\alpha=\frac{1}{2}\min\{1/(4GD),\beta\}$, where $\exp(-\beta f_t(\cdot))$ is concave for every $t\in[T]$.
In particular, $\alpha GD\le1/8$.
For the analysis, it suffices to consider $\lambda\in[1/T,1]$ and $\alpha\in[1/T,\min\{1,1/(8GD)\}]$.
If either parameter is below $1/T$, the corresponding logarithmic regret bound does not improve on the trivial $\O(T)$ bound, while the general convex regret bound remains applicable.
Conversely, if $\lambda,\alpha > 1$, they can be treated as $\lambda = 1$ and $\alpha = 1$, which only worsens the regret by an ignorable constant factor.

Then, we construct the candidate pools $\H^{\scvx}$ and $\H^{\exp}$ for the unknown parameters $\lambda$ and $\alpha$  respectively, using a geometric grid:
\begin{equation}
    \label{eq:candidate-pool}
    \H^{\exp} = \H^{\scvx} \triangleq \left\{ \frac{1}{T}, \frac{2}{T}, \frac{2^2}{T}, \cdots, \frac{2^{n-1}}{T} \right\},
\end{equation}
where $n = \ceil{\log_2 T} + 1 = \O(\log T)$ is the number of candidates in each pool. One base learner is instantiated for each candidate $\lambda_i$ or $\alpha_i$, with the final base learner, indexed by $N$, assigned to convex losses. Thus, there are $N=2n+1=\O(\log T)$ base learners in total. We define the best base learner as the one with the right guess of the curvature type and the closest guess of the curvature coefficient.
For example, suppose the online functions are $\alpha$-exp-concave (while this is unknown to the online learner). The best base learner, indexed by $\is$, uses a candidate parameter $\alpha_\is$ satisfying $\alpha_\is \le \alpha \le 2\alpha_\is$.

The base learners below use the corrected gradient $\g_t$ from \pref{alg:Universal}, with $\norm{\g_t}\le H$ as in the main text.
Under \pref{assum:boundedness}, $\norm{\g_t}\le G+\gamma D$, which equals $2G$ when $\gamma=G/D$.
Let $I^\exp\define\{1,\ldots,n\}$ and $I^\scvx\define\{n+1,\ldots,2n\}$ be the two curved index sets.
Initialize $\x_0=\x_1$, $\g_0=\zero$, and $\xh_{1,i}=\x_{0,i}=\x_1$ for every $i\in[N]$, with $h_{0,i}\equiv0$.
We write $h_{t,i}$ for the surrogate assigned to base $i$; in particular, $h_{t,N}=h_{t,N}^\cvx$ for the convex base.

For \emph{$\alpha$-exp-concave} functions, we define a group of base learners $\{\Bc_i^\exp\}_{i\in I^\exp}$, where the $i$-th base learner runs the algorithm below:
\begin{equation}
    \label{eq:base-exp}
   \begin{aligned}
        \x_{t,i} = {}&\argmin_{\x\in\X}
        \bbr{\inner{\nabla h_{t-1,i}^{\exp}(\x_{t-1,i})}{\x}
            +\D_{\psi_{t,i}}(\x,\xh_{t,i})},\\
    \xh_{t+1,i} = {}&\argmin_{\x\in\X}
        \bbr{\inner{\nabla h_{t,i}^{\exp}(\x_{t,i})}{\x}
            +\D_{\psi_{t,i}}(\x,\xh_{t,i})},
   \end{aligned} 
\end{equation}

where $\psi_{t,i}(\x) \define \frac{1}{2}\x^\top U_{t,i}\x$, $U_{t,i} = \sbr{1+\frac{\alpha_iH^2}{2}}I_d+\frac{\alpha_i}{8}\sum_{s=1}^{t-1}\nabla h_{s,i}^{\exp}(\x_{s,i})\nabla h_{s,i}^{\exp}(\x_{s,i})^\top$,
$\alpha_i$ is the corresponding element in $\H^\exp$, and $h_{t,i}^{\exp}(\cdot)$ is a surrogate loss function for $\Bc_i^\exp$, defined as
\begin{align*}
    h_{t,i}^{\exp}(\x) \define {}&\inner{\g_t}{\x}
        +\frac{\alpha_i}{4}\inner{\g_t}{\x-\x_t}^2.
\end{align*}
Similarly, for \emph{$\lambda$-strongly convex} functions, we define a group of base learners $\{\Bc_i^\scvx\}_{i\in I^\scvx}$, where the $i$-th base learner runs the algorithm below:
\begin{align}
    \label{eq:base-scvx}
    \x_{t,i} = \Pi_\X\mbr{\xh_{t,i}
        -\eta_{t,i}\nabla h_{t-1,i}^{\sc}(\x_{t-1,i})},\quad
    \xh_{t+1,i} = \Pi_\X\mbr{\xh_{t,i}
        -\eta_{t,i}\nabla h_{t,i}^{\sc}(\x_{t,i})},
\end{align}
where $\eta_{t,i}=4/(2+\lambda_i t)$, $\lambda_i$ is the corresponding element in $\H^\scvx$, and $h_{t,i}^{\sc}(\cdot)$ is a surrogate loss function for $\Bc_i^\scvx$, defined as
\begin{align*}
    h_{t,i}^{\sc}(\x) \define {}&\inner{\g_t}{\x}
        +\frac{\lambda_i}{4}\norm{\x-\x_t}^2.
\end{align*}
For \emph{convex} functions, we only have to define one base learner $\Bc^\cvx$, which updates as
\begin{align}
     \label{eq:base-cvx}
    \x_{t,N} = \Pi_\X\mbr{\xh_{t,N}
        -\eta_{t,N}\g_{t-1}},\quad
    \xh_{t+1,N} = \Pi_\X\mbr{\xh_{t,N}
        -\eta_{t,N}\g_t},
\end{align}
where $\eta_{t,N} = \min\bbr{
        D/{\sqrt{1+\sum_{s=2}^{t-1}
            \norm{\g_s-\g_{s-1}}^2}},1}$.
Finally, we summarize the configurations of base learners. Specifically, we deploy
\begin{align*}
    \{\Bc_i\}_{i\in[N]} \define {}&
        \{\Bc_i^\exp\}_{i\in I^\exp}
        \cup\{\Bc_i^\scvx\}_{i\in I^\scvx}
        \cup\{\Bc^\cvx\}
\end{align*}
as the total set of base learners.

\subsection{Proof of \pref{thm:universal}}
\label{app:universal-proof}
\begin{proof}
Here, we consider any comparator $\u\in\X$.
We first decompose the regret for the three curvature types, then analyze the meta and base regret separately, and finally combine their bounds. 
As in \pref{alg:Universal}, we use the corrected gradient
$\g_t=\nabla f_t(\x_t)+\gamma(\x_t-\x_{t-1})$, with $\gamma=G/D$ and $\norm{\g_t}\le H=2G$.
For simplicity, we write
\begin{align*}
    S_T\define\sumTT\norm{\x_t-\x_{t-1}}^2,\qquad
    \Vt_T\define\sumTT\norm{\g_t-\g_{t-1}}^2,
    \qquad K_\gamma\define3L^2+12\gamma^2.
\end{align*}
Here $\Vt_T$ is the variation of the corrected gradients; $\Vb_T$ in \pref{sec:preliminary} denotes the variation of the original loss gradients.
Under \pref{assum:smoothness}, the same decomposition as in \pref{eq:method-movement-decomposition} gives
\begin{align}
    \Vt_T\le{}& 3\sum_{t=2}^T\norm{\nabla f_t(\x_t)-\nabla f_{t-1}(\x_t)}^2
  +3\sum_{t=2}^T\norm{\nabla f_{t-1}(\x_t)-\nabla f_{t-1}(\x_{t-1})}^2\notag\\
  &\qquad+3\gamma^2\sum_{t=2}^T\norm{\x_t-2\x_{t-1}+\x_{t-2}}^2\notag\\
  &\le 3V_T+3L^2\sum_{t=2}^T\norm{\x_t-\x_{t-1}}^2+6\gamma^2\sum_{t=2}^T
  \left(\norm{\x_t-\x_{t-1}}^2+\norm{\x_{t-1}-\x_{t-2}}^2\right)\notag\\
  &\le 3V_T+(3L^2+12\gamma^2)S_T.
    \label{eq:universal-corrected-variation}
\end{align}

\paragraph{Regret Decomposition.}
For each base index $i\in[N]$, put $r_{t,i}=\inner{\g_t}{\x_t-\x_{t,i}}$. The surrogates $h_{t,i}$ are those explicitly defined in \pref{app:universal-bases}.

For $\lambda$-strongly convex functions, choose an index $\is\in I^\scvx$ whose guess $\lambda_{\is}$ satisfies $\lambda_{\is}\le\lambda\le2\lambda_{\is}$. Strong convexity and \pref{eq:prop1} imply
\begin{align}
\Reg_T \le {}&\sumT\left(\inner{\g_t}{\x_t-\u}
-\frac\lambda2\norm{\x_t-\u}^2\right)
-\frac\gamma2S_T+\frac{\gamma D^2}{2}\nonumber\\*
\le {}&\underbrace{\sumT\left(r_{t,\is}-\frac{\lambda_{\is}}{4}\norm{\x_t-\x_{t,\is}}^2\right)}_{\MetaReg^{\scvx}}
+\underbrace{\sumT\bigl(h_{t,\is}^{\sc}(\x_{t,\is})-h_{t,\is}^{\sc}(\u)\bigr)}_{\BaseReg^{\scvx}}-\frac\gamma2S_T+\frac{\gamma D^2}{2}. 
\label{eq:decomposition-sc}
\end{align}
Here $h_{t,\is}^{\sc}(\x)=\inner{\g_t}{\x}+(\lambda_{\is}/4)\norm{\x-\x_t}^2$. 

For $\alpha$-exp-concave functions, choose $\is\in I^\exp$ with $\alpha_\is\le\alpha\le2\alpha_\is$ and transfer the directional curvature from the original gradient $\nabla f_t(\x_t)$ to the corrected gradient $\g_t$. The inequality $(q-b)^2\ge q^2/2-b^2$ gives
\begin{align}
-\frac\alpha2\sumT\inner{\nabla f_t(\x_t)}{\x_t-\u}^2\le {}&-\frac\alpha4\sumT\inner{\g_t}{\x_t-\u}^2
+\frac{\alpha\gamma^2D^2}{2}S_T.
\label{eq:curvature-transfer}
\end{align}
Since $\alpha\gamma D^2=\alpha GD\le1/8$, combining this inequality with \pref{eq:prop1} gives
\begin{align}
\Reg_T\le {}&\sumT\left(\inner{\g_t}{\x_t-\u}
-\frac\alpha4\inner{\g_t}{\x_t-\u}^2\right)
-\frac\gamma4S_T+\frac{\gamma D^2}{2}\nonumber\\
\le{}&\underbrace{\sumT\left(r_{t,\is}-\frac{\alpha_{\is}}{4}r_{t,\is}^2\right)}_{\MetaReg^{\exp}}
+\underbrace{\sumT\bigl(h_{t,\is}^{\exp}(\x_{t,\is})-h_{t,\is}^{\exp}(\u)\bigr)}_{\BaseReg^{\exp}}-\frac\gamma4S_T+\frac{\gamma D^2}{2},
\label{eq:decomposition-exp}
\end{align}
where $h_{t,\is}^{\exp}(\x)\define\inner{\g_t}{\x}+(\alpha_{\is}/4)\inner{\g_t}{\x-\x_t}^2$.

For convex functions, using the convex base indexed by $N$, convexity and \pref{eq:prop1} give
\begin{align}
\Reg_T \le {}&\sumT\inner{\g_t}{\x_t-\u}
-\frac\gamma2S_T+\frac{\gamma D^2}{2}\nonumber\\*
={} &\underbrace{\sumT r_{t,N}}_{\MetaReg^\cvx}
+\underbrace{\sumT\bigl(h_{t,N}^{\cvx}(\x_{t,N})-h_{t,N}^{\cvx}(\u)\bigr)}_{\BaseReg^\cvx}
-\frac\gamma2S_T+\frac{\gamma D^2}{2}.
\label{eq:decomposition-convex}
\end{align}

\paragraph{Meta Regret Analysis.}
We adopt \omlprod~\citep{wei2016tracking} as the meta learner and recall its optimistic second-order regret guarantee below.
\begin{Lemma}[Theorem 3.4 of \citet{wei2016tracking}]
\label{lem:optimistic-mlprod}
Denote by $\p_t\in\Delta_N$ the algorithm's weights, $\ellb_t\in[0,1]^N$ the loss vector, and $\widetilde m_{t,i}\in[-1,1]$ the optimism available before observing $\ellb_t$.
The regret of \omlprod with respect to any base learner $i\in[N]$ satisfies
\begin{align*}
\sumT\widetilde r_{t,i}\le {}& C_0\sqrt{1+\sum_{t=1}^T(\widetilde r_{t,i}-\widetilde m_{t,i})^2}+C_2,
\end{align*}
where $\widetilde r_{t,i}=\inner{\p_t}{\ellb_t}-\ell_{t,i}$, $C_0 = \sqrt{\log N} + \log (1 + \frac{N}{e} (1 + \log (T+1))) / \sqrt{\log N}$, and $C_2 = \frac{1}{4}(\log N + \log (1 + \frac{N}{e} (1 + \log (T+1)))) + 2 \sqrt{\log N} + 16 \log N$.
\end{Lemma}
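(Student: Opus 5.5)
The plan follows the standard potential-function analysis of \textsf{Adapt-ML-Prod}, with optimism inserted into the weights as in \pref{eq:omlprod}. For this self-contained argument I use the learning rates $\epsilon_{t,i}=\min\{\frac14,\sqrt{\ln N/(1+\sum_{s\le t}(\widetilde r_{s,i}-\widetilde m_{s,i})^2)}\}$, which are non-increasing in $t$. Since $\widetilde r_{t,i},\widetilde m_{t,i}\in[-1,1]$, we have $|\widetilde r_{t,i}-\widetilde m_{t,i}|\le2$, hence $x_{t,i}\define\epsilon_{t-1,i}(\widetilde r_{t,i}-\widetilde m_{t,i})\in[-\frac12,\frac12]$. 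The potential is $\Phi_t\define\sum_i W_{t,i}$ with $W_{0,i}=1/N$. The goal is to show that $\Phi_T$ stays bounded by $1+\frac{N}{e}(1+\log(T+1))$, and then read off the regret from $\ln W_{T,i}\le\ln\Phi_T$.

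\textbf{Step 1 (one-step non-increase before re-powering).} Let $W'_{t,i}\define W_{t-1,i}\exp(\epsilon_{t-1,i}\widetilde r_{t,i}-\epsilon_{t-1,i}^2(\widetilde r_{t,i}-\widetilde m_{t,i})^2)$. Split the exponent as $\epsilon_{t-1,i}\widetilde m_{t,i}+x_{t,i}-x_{t,i}^2$. Using $e^{x-x^2}\le1+x$ for $x\ge-\frac12$, we get
\[
W'_{t,i}\le W_{t-1,i}e^{\epsilon_{t-1,i}\widetilde m_{t,i}}\bigl(1+\epsilon_{t-1,i}\widetilde r_{t,i}-\epsilon_{t-1,i}\widetilde m_{t,i}\bigr).
\]
The weights satisfy $p_{t,i}\propto\epsilon_{t-1,i}e^{\epsilon_{t-1,i}\widetilde m_{t,i}}W_{t-1,i}$, and $\sum_ip_{t,i}\widetilde r_{t,i}=0$ because $\widetilde r_{t,i}=\inner{\p_t}{\ellb_t}-\ell_{t,i}$. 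So the cross term $\sum_i\epsilon_{t-1,i}e^{\epsilon_{t-1,i}\widetilde m_{t,i}}W_{t-1,i}\widetilde r_{t,i}$ vanishes. Each remaining term has the form $W_{t-1,i}e^{y}(1-y)$ with $y=\epsilon_{t-1,i}\widetilde m_{t,i}$, and $e^y(1-y)\le1$. Hence $\sum_iW'_{t,i}\le\Phi_{t-1}$.

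\textbf{Step 2 (cost of changing learning rates).} By definition $W_{t,i}=(W'_{t,i})^{a}$ with $a=\epsilon_{t,i}/\epsilon_{t-1,i}\in(0,1]$. Apply $x^{a}\le x+\frac{1-a}{e}$, valid for $x\ge0$ and $a\in(0,1]$, to get
\[
\Phi_t\le\Phi_{t-1}+\frac1e\sum_i\Bigl(1-\frac{\epsilon_{t,i}}{\epsilon_{t-1,i}}\Bigr).
\]
Bound $1-\epsilon_{t,i}/\epsilon_{t-1,i}\le\ln(\epsilon_{t-1,i}/\epsilon_{t,i})$ and telescope, so each $i$ contributes $\ln(\epsilon_{0,i}/\epsilon_{T,i})$. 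Since $\epsilon_{t,i}^{-2}$ grows at most linearly in $t$ (increments $\le4$), this is $\le\frac12\ln(1+4T)$. The target constant $1+\log(T+1)$ in \pref{lem:optimistic-mlprod} is presumably obtained by a slightly sharper version of this telescoping. Either way, $\Phi_T\le1+\frac Ne(1+\log(T+1))\define B$.

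\textbf{Step 3 (unrolling a single expert).} Taking logs and unrolling the recursion for $W_{T,i}$ gives
\[
\frac{\ln W_{T,i}}{\epsilon_{T,i}}=\frac{\ln W_{0,i}}{\epsilon_{0,i}}+\sum_t\bigl(\widetilde r_{t,i}-\epsilon_{t-1,i}(\widetilde r_{t,i}-\widetilde m_{t,i})^2\bigr).
\]
Combining with $\ln W_{T,i}\le\ln B$ yields
\[
\sum_t\widetilde r_{t,i}\le\frac{\ln B}{\epsilon_{T,i}}+\frac{\ln N}{\epsilon_{0,i}}+\sum_t\epsilon_{t-1,i}(\widetilde r_{t,i}-\widetilde m_{t,i})^2.
\]
For the first term, $1/\epsilon_{T,i}\le4+\sqrt{(1+\sum_t(\widetilde r-\widetilde m)^2)/\ln N}$, which produces the $\log(1+\frac Ne(1+\log(T+1)))/\sqrt{\log N}$ part of $C_0$ plus constants in $C_2$. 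The second term is the $16\log N$-type constant in $C_2$.

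\textbf{Main obstacle.} The delicate step is the last sum, $\sum_t\epsilon_{t-1,i}(\widetilde r_{t,i}-\widetilde m_{t,i})^2$, because $\epsilon_{t-1,i}$ uses the sum only up to $t-1$. I would first use the standard lemma $\sum_t a_t/\sqrt{c+\sum_{s<t}a_s}\le2\sqrt{c+\sum_ta_t}+\O(\max_ta_t/\sqrt c)$ with $a_t\le4$. On rounds where the cap $\frac14$ is active, I would bound the contribution directly by a constant. Together these give $\sqrt{\ln N}\cdot\O(\sqrt{1+\sum_t(\widetilde r-\widetilde m)^2})$ plus the $2\sqrt{\log N}$ term in $C_2$, completing $C_0$ and $C_2$ up to constant bookkeeping.
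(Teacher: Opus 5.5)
The paper gives no proof of this lemma; it quotes Theorem~3.4 of \citet{wei2016tracking}, which rests on the Adapt-ML-Prod potential argument. Your route is that standard one. Several steps are correct:
\begin{itemize}
\item Step~1: the split $\epsilon_{t-1,i}\widetilde m_{t,i}+x_{t,i}-x_{t,i}^2$, the bound $e^{x-x^2}\le 1+x$ for $x\ge-\frac12$, the cancellation $\sum_i p_{t,i}\widetilde r_{t,i}=0$, and $e^{y}(1-y)\le 1$.
\item The unrolling identity in Step~3.
\item The bound $\sum_t a_t/\sqrt{1+\sum_{s<t}a_s}\le 2\sqrt{1+\sum_t a_t}+\max_t a_t$.
\end{itemize}

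\textbf{The error in Step~2.} The inequality $x^{a}\le x+\frac{1-a}{e}$ is false. With $a=\frac12$ and $x=\frac14$ you get $x^{a}=\frac12$, but $\frac14+\frac{1}{2e}\approx 0.434$. In general $\max_{x\ge 0}(x^{a}-x)=(1-a)\,a^{a/(1-a)}$, and $a^{a/(1-a)}\ge 1/e$ with equality only as $a\to 1$. So your bound on $\Phi_T$ does not follow. There are two valid repairs.
\begin{itemize}
\item Since $a^{a/(1-a)}\le 1$, use $x^{a}\le x+(1-a)$. Your telescoping via $1-a\le\ln(1/a)$ then goes through, but it gives $\Phi_T\le 1+\frac{N}{2}\ln(1+4T)$, not $1+\frac{N}{e}(1+\log(T+1))$.
\item Use the inequality from the Adapt-ML-Prod analysis, $x^{a}\le x+\frac1e\bigl(\frac1a-1\bigr)$. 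It follows from $a^{1/(1-a)}\le 1/e$, i.e.\ from $\ln a\le a-1$. Each round then costs $\frac1e\bigl(\epsilon_{t-1,i}/\epsilon_{t,i}-1\bigr)$, which does not telescope through $\ln$. Instead bound $\epsilon_{t-1,i}/\epsilon_{t,i}-1\le a_t/\bigl(2(1+\sum_{s<t}a_s)\bigr)$ with $a_t=(\widetilde r_{t,i}-\widetilde m_{t,i})^2\le 4$, and sum. This again gives $\O(\log T)$ per expert.
\end{itemize}

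\textbf{The constants.} Writing ``either way'' for $\Phi_T\le B$ and ``up to constant bookkeeping'' hides a real discrepancy. Your decomposition gives three things that differ from the statement:
\begin{itemize}
\item The $\sqrt{\ln N}$ part of the coefficient of $\sqrt{1+\sum_t(\widetilde r_{t,i}-\widetilde m_{t,i})^2}$ is $2\sqrt{\ln N}$, not $\sqrt{\log N}$. The factor $2$ is asymptotically tight for $\sum_t a_t/\sqrt{1+\sum_{s<t}a_s}$.
\item The cap gives $1/\epsilon_{T,i}\le 4+\sqrt{(1+\sum_t a_t)/\ln N}$, hence an additive $4\ln B$.
\item The potential bound $B$ itself changes to some $B'$ after either repair.
\end{itemize}
What your argument actually proves is the lemma up to absolute constants:
\[
\sum_t\widetilde r_{t,i}\lesssim\Bigl(\sqrt{\ln N}+\frac{\ln B'}{\sqrt{\ln N}}\Bigr)\sqrt{1+\sum_t(\widetilde r_{t,i}-\widetilde m_{t,i})^2}+\ln N+\ln B',
\qquad \ln B'=\O(\log N+\log\log T).
\]
That is enough for every use in the paper, which only needs $C_0$ and $C_1=2C_0+2C_2=\O(\log N)$ with iterated logarithms treated as constants. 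The exact $C_0$ and $C_2$ in the statement, however, rest on the citation, not on your proof.
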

To apply \pref{lem:optimistic-mlprod}, we define $\ell_{t,i}\define\frac{1}{2}+\frac{\inner{\g_t}{\x_{t,i}-\x_1}}{2HD}$ and rewrite \pref{eq:omlprod} and \pref{eq:meta-learning-rate} in the following normalized form:
\begin{align*}
    \widetilde r_{t,i}\define{}&\frac{r_{t,i}}{2HD}
        =\inner{\p_t}{\ellb_t}-\ell_{t,i},\quad
    \widetilde m_{t,i}\define\frac{m_{t,i}}{2HD},\\
    \widetilde\epsilon_{t,i}={} &2HD\,\epsilon_{t,i}=\min\bbr{\frac{1}{4},
        \sqrt{\frac{\ln N}{1+\sum_{s=1}^t
            (\widetilde r_{s,i}-\widetilde m_{s,i})^2}}}.
\end{align*}
This normalization leaves the meta learner's weight updates and predictions unchanged. Since $\norm{\g_t}\le H$ and $\norm{\x_{t,i}-\x_1}\le D$, we have $\ell_{t,i}\in[0,1]$ and $\widetilde m_{t,i}=m_{t,i}/(2HD)\in[-1/2,1/2]$. Thus \pref{lem:optimistic-mlprod} yields
\begin{align}
\sumT r_{t,i}\le {}& C_0\sqrt{\sumT(r_{t,i}-m_{t,i})^2}+C_1HD,
\label{eq:meta-guarantee}
\end{align}
where $C_1=2C_0+2C_2=\O(\log N)$.

For \emph{$\lambda$-strongly convex} functions, according to \pref{eq:Bregman-optimism}, we have $m_{t,\is}=0$ for the selected strongly convex base learner $\is$.
By \pref{lem:optimistic-mlprod}, the meta regret in \pref{eq:decomposition-sc} can be bounded as
\begin{align}
\MetaReg^{\scvx} \le {}& C_0H\sqrt{\sumT\norm{\x_t-\x_{t,\is}}^2}+C_1HD-\frac{\lambda_{\is}}{4}\sumT\norm{\x_t-\x_{t,\is}}^2\nonumber\\*
\le {}& C_1HD+\frac{2C_0^2H^2}{\lambda_{\is}}-\frac{\lambda_{\is}}{8}\sumT\norm{\x_t-\x_{t,\is}}^2.
\label{eq:meta-sc-bound}
\end{align}
Here the first inequality uses $r_{t,\is}^2\le H^2\norm{\x_t-\x_{t,\is}}^2$, and the second uses the AM--GM inequality.

For \emph{exp-concave} functions, we similarly have $m_{t,\is}=0$ for the selected exp-concave base learner $\is$.
By \pref{lem:optimistic-mlprod}, the meta regret in \pref{eq:decomposition-exp} satisfies
\begin{align}
\MetaReg^{\exp} \le {}& C_0\sqrt{\sumT r_{t,\is}^2}+C_1HD-\frac{\alpha_{\is}}{4}\sumT r_{t,\is}^2\nonumber\\*
\le {}& C_1HD+\frac{2C_0^2}{\alpha_{\is}}-\frac{\alpha_{\is}}{8}\sumT r_{t,\is}^2.
\label{eq:meta-exp-bound}
\end{align}
The last step again follows from the AM--GM inequality.
The remaining negative terms in these two bounds will cancel the surrogate-dependent terms in the base regret.

For \emph{convex} functions, \pref{eq:Bregman-optimism} gives $m_{t,N}=\inner{\g_{t-1}}{\x_{t-1}-\x_{t,N}}$.
Consequently, the prediction error decomposes as
\begin{align*}
r_{t,N}-m_{t,N}
=\inner{\g_t-\g_{t-1}}{\x_t-\x_{t,N}}
+\inner{\g_{t-1}}{\x_t-\x_{t-1}}.
\end{align*}
Using $\norm{\x_t-\x_{t,N}}\le D$, $\norm{\g_{t-1}}\le H$, and $\g_0=\zero$, we obtain
\begin{align}
\sumT(r_{t,N}-m_{t,N})^2
\le H^2D^2+2D^2\Vt_T+2H^2S_T.
\label{eq:convex-prediction-error}
\end{align}
By \pref{lem:optimistic-mlprod}, the meta regret in \pref{eq:decomposition-convex} is therefore bounded by
\begin{align}
\MetaReg^\cvx \le {}& C_0\sqrt{2D^2\Vt_T+2H^2S_T+H^2D^2}+C_1HD\nonumber\\*
\le {}&\sqrt2C_0D\sqrt{\Vt_T}+\sqrt2C_0H\sqrt{S_T}+(C_0+C_1)HD.
\label{eq:meta-convex-bound}
\end{align}
The additional $\sqrt{S_T}$ term is absorbed by the negative movement term in \pref{prop:approach}.

\paragraph{Base Regret Analysis.}
For \emph{$\lambda$-strongly convex} functions, when using the update rule~\pref{eq:base-scvx}, following the OOMD analysis used for Eq.~(C.9) of \citet{yan2024simple}, the base regret can be bounded as
\begin{align}
    \BaseReg^\scvx\le{}&\O\sbr{\frac{1}{\lambda_\is}\log\sbr{1+\lambda_\is\sumTT\norm{\nabla h_{t,\is}(\x_{t,\is})-\nabla h_{t-1,\is}(\x_{t-1,\is})}^2}}.
    \label{eq:universal-sc-base-raw}
\end{align}
Next, we analyze the empirical gradient variation of the surrogate function $h_{t,\is}^\sc(\cdot)$:
\begin{align}
    &\sumTT\norm{\nabla h_{t,\is}\!(\x_{t,\is})-\nabla h_{t-1,\is}\!(\x_{t-1,\is})}^2=\sumTT\!\norm{\g_t-\g_{t-1}+\tfrac{\lambda_\is}{2}\sbr{\x_{t,\is}-\x_t-\x_{t-1,\is}+\x_{t-1}}}^2\notag\\
    &\le3\Vt_T+\frac{3\lambda_\is^2}{2}\sumT\norm{\x_t-\x_{t,\is}}^2\le 9V_T+(9L^2+36\gamma^2)S_T+\frac{3\lambda_\is^2}{2}\sumT\norm{\x_t-\x_{t,\is}}^2,
    \label{eq:universal-sc-base-variation}
\end{align}
where the last inequality uses \pref{eq:universal-corrected-variation}.
Plugging the surrogate's empirical gradient variation into \pref{eq:universal-sc-base-raw}, we obtain
\begin{align}
    \BaseReg^\scvx\le{}&\O\sbr{\frac{1}{\lambda_\is}\log\sbr{1+\lambda_\is\sbr{V_T+S_T+\lambda_\is^2\sumT\norm{\x_t-\x_{t,\is}}^2}}}.
    \label{eq:universal-sc-base-bound}
\end{align}

For \emph{$\alpha$-exp-concave} functions, when using the update rule~\pref{eq:base-exp}, following the OOMD analysis used for Eq.~(C.7) of \citet{yan2024simple}, the base regret can be bounded as
\begin{align}
    \BaseReg^\exp\le{}&\O\sbr{\frac{d}{\alpha_\is}\log\sbr{1+\frac{\alpha_\is}{d}\sumTT\norm{\nabla h_{t,\is}(\x_{t,\is})-\nabla h_{t-1,\is}(\x_{t-1,\is})}^2}}+\O(1).
    \label{eq:universal-exp-base-raw}
\end{align}
Next, we analyze the empirical gradient variation of the surrogate function $h_{t,\is}^\exp(\cdot)$:
\begin{align}
    &\sumTT\norm{\nabla h_{t,\is}(\x_{t,\is})-\nabla h_{t-1,\is}(\x_{t-1,\is})}^2=\sumTT\norm{\g_t-\g_{t-1}-\frac{\alpha_\is}{2}\sbr{\g_t r_{t,\is}-\g_{t-1}r_{t-1,\is}}}^2\notag\\
    &\le3\Vt_T+\frac{3\alpha_\is^2H^2}{2}\sumT r_{t,\is}^2
    \le9V_T+(9L^2+36\gamma^2)S_T+\frac{3\alpha_\is^2H^2}{2}\sumT r_{t,\is}^2,
    \label{eq:universal-exp-base-variation}
\end{align}
where the equality uses $\nabla h_{t,\is}(\x_{t,\is})=\g_t(1-\alpha_\is r_{t,\is}/2)$, the first inequality uses the squared triangle inequality, a shift of the summation index, and $\norm{\g_t}\le H$, and the last uses \pref{eq:universal-corrected-variation}.
Plugging the surrogate's empirical gradient variation into \pref{eq:universal-exp-base-raw}, we obtain
\begin{align}
    \BaseReg^\exp\le{}&\O\sbr{\frac{d}{\alpha_\is}\log\sbr{1+\frac{\alpha_\is}{d}\sbr{V_T+S_T+\alpha_\is^2\sumT r_{t,\is}^2}}}.
    \label{eq:universal-exp-base-bound}
\end{align}

For \emph{convex} functions, when using the update rule~\pref{eq:base-cvx}, following the OOMD analysis in the proof of Lemma 10 of \citet{yan2023universal}, the base regret can be bounded as
\begin{align}
    \BaseReg^\cvx\le{}&\O\sbr{\sqrt{1+\sumTT\norm{\nabla h_{t,N}(\x_{t,N})-\nabla h_{t-1,N}(\x_{t-1,N})}^2}}.
    \label{eq:universal-convex-base-raw}
\end{align}
Next, we analyze the empirical gradient variation of the surrogate function $h_{t,N}^\cvx(\cdot)$:
\begin{align}
    \sumTT\norm{\nabla h_{t,N}(\x_{t,N})-\nabla h_{t-1,N}(\x_{t-1,N})}^2=\Vt_T
    \le3V_T+(3L^2+12\gamma^2)S_T,
    \label{eq:universal-convex-base-variation}
\end{align}
where the equality uses $\nabla h_{t,N}(\x)=\g_t$ and the expression for $\Vt_T$, and the inequality uses \pref{eq:universal-corrected-variation}.
Plugging the surrogate's empirical gradient variation into \pref{eq:universal-convex-base-raw}, we obtain
\begin{align}
    \BaseReg^\cvx\le{}&\O\sbr{\sqrt{1+V_T+S_T}}\le\O\sbr{\sqrt{1+V_T}+\sqrt{S_T}}.
    \label{eq:universal-convex-base-bound}
\end{align}

\paragraph{Overall Regret Analysis.}
For \emph{$\lambda$-strongly convex} functions, combining \pref{eq:decomposition-sc}, \pref{eq:meta-sc-bound}, and \pref{eq:universal-sc-base-bound}, we obtain
\begin{align*}
    \Reg_T\le{}&\O\sbr{\frac{1}{\lambda_\is}\log\sbr{2+V_T+S_T+\lambda_\is^2\sumT\norm{\x_t-\x_{t,\is}}^2}}-\frac{\lambda_\is}{8}\sumT\norm{\x_t-\x_{t,\is}}^2-\frac\gamma2S_T\\
    \le{}&\O\sbr{\frac{1}{\lambda}\log V_T},
\end{align*}
where the last inequality uses $\log(1+x)\le x$ and $\lambda_\is\le\lambda\le2\lambda_\is$.

For \emph{$\alpha$-exp-concave} functions, combining \pref{eq:decomposition-exp}, \pref{eq:meta-exp-bound}, and \pref{eq:universal-exp-base-bound}, we obtain
\begin{align*}
    \Reg_T\le{}&\O\sbr{\frac{d}{\alpha_\is}\log\sbr{2+\frac{\alpha_\is}{d}\sbr{V_T+S_T+\alpha_\is^2\sumT r_{t,\is}^2}}}-\frac{\alpha_\is}{8}\sumT r_{t,\is}^2-\frac\gamma4S_T\\
    \le{}&\O\sbr{\frac{d}{\alpha}\log V_T},
\end{align*}
where the last inequality uses $\log(1+x)\le x$  and $\alpha_\is\le\alpha\le2\alpha_\is$.

For \emph{convex} functions, \pref{eq:meta-convex-bound} and \pref{eq:universal-convex-base-raw} give the empirical guarantee
\begin{align}
    \sumT\inner{\g_t}{\x_t-\u}\le{}&\O\sbr{\sqrt{\Vt_T}+\sqrt{S_T}}.
    \label{eq:universal-convex-empirical}
\end{align}
Combining this with \pref{eq:decomposition-convex} and \pref{eq:universal-corrected-variation}, and absorbing $\sqrt{S_T}$ by Young's inequality, gives
\begin{align}
    \Reg_T\le{}&\O\sbr{\sqrt{V_T}+\sqrt{S_T}}-\frac\gamma2S_T\le\O\sbr{\sqrt{V_T}}.
    \label{eq:universal-convex-bound}
\end{align}

\end{proof}

\section{Proofs for \pref{sec:applications-games}}
\label{app:applications}
\begin{proof}
Let $\g_t^\x=\nabla_\x f(\x_t,\y_t)+\gamma(\x_t-\x_{t-1})$ and $\g_t^\y=-\nabla_\y f(\x_t,\y_t)+\gamma(\y_t-\y_{t-1})$ be the corrected gradients of the two players, with $\gamma=G/D$, $\x_0=\x_1$, and $\y_0=\y_1$.
Write the empirical gradient variations and cumulative squared movements as
\begin{align*}
    \Vb_T^\x={}&\sumTT\norm{\nabla_\x f(\x_t,\y_t)-\nabla_\x f(\x_{t-1},\y_{t-1})}^2,\\
    \Vt_T^\x={}&\sumTT\norm{\g_t^\x-\g_{t-1}^\x}^2,\qquad S_T^\x=\sumTT\norm{\x_t-\x_{t-1}}^2,\\
    \Vb_T^\y={}&\sumTT\norm{\nabla_\y f(\x_t,\y_t)-\nabla_\y f(\x_{t-1},\y_{t-1})}^2,\\
    \Vt_T^\y={}&\sumTT\norm{\g_t^\y-\g_{t-1}^\y}^2,\qquad S_T^\y=\sumTT\norm{\y_t-\y_{t-1}}^2.
\end{align*}

For the dishonest case, the two-player game degenerates to two separate online convex optimization problems. Therefore, the results for both bilinear and strongly-convex-strongly-concave games follow directly from \pref{thm:universal} since $V_T=\O(T)$.

 For the honest case, we focus on the convex-concave game, where $f(\x, \y)$ is convex in $\x$ and concave in $\y$, which subsumes both bilinear and strongly-convex-strongly-concave games. 
The proof begins by analyzing the regret of player-$\x$, following the structure in \pref{app:universal-proof}.

 For the meta regret, \pref{eq:meta-convex-bound} gives
\begin{equation}
    \label{eq:meta-game}
 \begin{aligned}
\MetaReg^\x\le {}&\sqrt2C_0D\sqrt{\Vt_T^\x}+\sqrt2C_0H\sqrt{S_T^\x}+(C_0+C_1)HD,\\
\MetaReg^\y\le {}&\sqrt2C_0D\sqrt{\Vt_T^\y}+\sqrt2C_0H\sqrt{S_T^\y}+(C_0+C_1)HD.
\end{aligned}
\end{equation}

For the base regret, \pref{eq:universal-convex-base-raw} and \pref{eq:universal-convex-base-variation} give
\begin{align}
    \label{eq:base-game}
    \BaseReg^\x\le \O\sbr{\sqrt{\Vt_T^\x}},\quad
    \BaseReg^\y\le \O\sbr{\sqrt{\Vt_T^\y}}.
\end{align}

Combining \pref{eq:meta-game} and \pref{eq:base-game}, we have
\begin{align*}
&\Reg_T^\x+\Reg_T^\y\\
\le{}& \MetaReg^\x+\MetaReg^\y+\BaseReg^\x+\BaseReg^\y- \frac{\gamma}{2}S_T^\x-\frac{\gamma}{2}S_T^\y+\O(1)\\
\le{}&\O\sbr{\sqrt{\Vt_T^\x}+\sqrt{\Vt_T^\y}+\sqrt{S_T^\x}+\sqrt{S_T^\y}}- \frac{\gamma}{2}S_T^\x-\frac{\gamma}{2}S_T^\y\\
\le{}&\O\sbr{\sqrt{\Vb_T^\x}+\sqrt{\Vb_T^\y}+\sqrt{S_T^\x}+\sqrt{S_T^\y}}- \frac{\gamma}{2}S_T^\x-\frac{\gamma}{2}S_T^\y\le\O(1),
\end{align*}
where the third inequality follows from $\Vt_T^\x\lesssim \Vb_T^\x+S_T^\x$ and $\Vt_T^\y\lesssim \Vb_T^\y+S_T^\y$. The last inequality uses $\Vb_T^\x\lesssim S_T^\x+S_T^\y$ and $\Vb_T^\y\lesssim S_T^\y+S_T^\x$.
\end{proof}

\end{document}